\def\resp{respectively}
\def\Models{\ensuremath{\mathcal{M}}}
\def\model{\ensuremath{\mathcal{M}}}
\def\CFXs{\ensuremath{\mathcal{C}}}
\def\Classes{\ensuremath{\mathcal{L}}}
\def\Props{\ensuremath{\mathcal{P}}}
\def\inn{\ensuremath{\mathbf{x}}}
\def\ProblemFull{\textbf{Recourse-Aware Ensembling}}
\def\Problem{RAE}
\def\AggModels{\ensuremath{\mathcal{M}^{n}}}
\def\method{naive ensembling}
\def\Args{\ensuremath{\mathcal{X}}}
\def\Atts{\ensuremath{\mathcal{A}}}
\def\Supps{\ensuremath{\mathcal{S}}}
\def\arg{\ensuremath{\alpha}}
\def\arga{\ensuremath{\arg_1}}
\def\argb{\ensuremath{\arg_2}}
\def\argc{\ensuremath{\arg_3}}
\def\model{\ensuremath{M}}
\def\cfx{\ensuremath{\mathbf{c}}}
\def\prop{\ensuremath{\pi}}
\def\ppreceq{\preceq_\Props}
\def\psucc{\succ_\Props}
\def\psimeq{\simeq_\Props}
\def\mpreceq{\preceq_\Models}
\def\msucc{\succ_\Models}
\def\msucceq{\succeq_\Models}
\def\msimeq{\simeq_\Models}
\def\class{\ensuremath{\ell}}
\def\rel{\ensuremath{r}}
\newcommand\myprob[3]{%
  \begin{tcolorbox}
  {\bfseries Problem: #1}\\
  {\bfseries Input}: #2\\
  {\bfseries Output}: #3\par
  \end{tcolorbox}
}
\definecolor{cadmiumgreen}{rgb}{0.0, 0.75, 0.40}
\definecolor{richcarmine}{rgb}{0.85,0.00,0.23}
\def\resp{resp.}
\newtheorem{example}{Example}
\newtheorem{theorem}{Theorem}
\newtheorem{lemma}{Lemma}
\newtheorem{definition}{Definition}
\newtheorem{proposition}{Proposition}
\title{Recourse under Model Multiplicity via Argumentative Ensembling \\(Technical Report)}
\author {
    Junqi Jiang\footnote{Equal contribution.},
    Antonio Rago\footnotemark[1],
    Francesco Leofante,
    Francesca Toni\\
\affiliations\large
    Department of Computing, Imperial College London, UK\\
\emails
    \{junqi.jiang, a.rago, f.leofante, f.toni\}@imperial.ac.uk
}
\begin{document}
\maketitle

\begin{abstract}
Model Multiplicity (MM) 
arises when
multiple, equally performing machine learning models can be trained to solve the same prediction task. Recent studies 
show that models obtained under MM may produce inconsistent predictions for the same input. {When this occurs}
, it  becomes challenging to 
provide counterfactual explanations (CEs), a common
means for offering  
recourse recommendations to individuals 
negatively affected by models' predictions
. 
%
In this paper, we formalise this problem, which we name \emph{recourse-aware ensembling}, and identify several desirable properties which methods for solving it should satisfy.
We 
show 
that existing ensembling methods, naturally extended in different ways to provide CEs, fail to satisfy these properties.
We then introduce \emph{argumentative ensembling}, deploying computational argumentation 
to 
guarantee robustness of CEs to MM, while also accommodating customisable user preferences.
We show theoretically and experimentally that argumentative ensembling 
satisfies properties which the existing methods lack, and that the trade-offs are minimal wrt 
accuracy.

\end{abstract}

\section{Introduction}
\label{sec:intro}

Model Multiplicity (MM), also known as predictive multiplicity or the Rashomon Effect, refers to a scenario where multiple, equally performing machine learning (ML) models may be trained to solve a prediction task \cite{Black_22,breiman2001statistical,Marx_20}. While the existence of multiple models that achieve the same accuracy is not a problem per se, recent literature~\cite{Marx_20,Black_22} has drawn attention to the fact that these models may differ greatly in their internals 
and might thus produce inconsistent predictions when deployed. Consider the commonly used scenario of a loan application, where an individual modelled by input $\inn$ with features \textit{unemployed} status, \textit{33} years of age and \textit{low} credit rating applies for a loan. Assume the bank has trained a set of ML models $\Models=\{\model_1, \model_2, \model_3\}$ to predict whether the loan should be granted or not. 
Even though each $\model_i$ may exhibit good performance overall
, their internal differences 
may lead to conflicts
, e.g. if $\model_1(\inn) = \model_2(\inn) = 0$ (i.e., reject), while $\model_3(\inn) =1 $ (i.e., accept).

Ensembling techniques are commonly used to deal with MM scenarios
~\cite{black2022selective,Black_22}. A standard 
such technique is \textit{\method}~\cite{black2022selective}, where the predictions of several models are aggregated to produce a single outcome that reflects the opinion of the majority of models. For instance, \method{} applied to our running example would result in a rejection, as a majority of the models agree that the applicant is not creditworthy. While ensembling methods have been shown to be effective in practice, their application to consequential decision-making tasks raises some important challenges. 

Indeed, 
these methods tend to ignore
the need to provide avenues for recourse to users negatively impacted by the models’ outputs, which the ML literature typically achieves via the provision of \emph{counterfactual explanations} (CEs) for the predictions (see \cite{guidotti2022counterfactual,mishra2021survey} for recent overviews). Dealing with MM while also taking CEs into account is non-trivial. Indeed, standard algorithms designed to generate CEs for
single models typically fail to produce recourse recommendations
that are valid across equally performing models~\cite{pawelczyk2022uai,LeofanteBR23}. This phenomenon may have troubling implications as a lack of robustness may lead users to question whether a CE is actually explaining the underlying decision-making task and is not just an artefact of a (subset of) model(s). 

Another challenge is that naive ensembling ignores other meta-evaluation aspects of models like fairness, robustness, and interpretability, while it has been shown that models under MM can demonstrate substantial differences in these regards \cite{coston2021characterizing,rudin2019stop,d2022underspecification} 
 and users may have strong preferences for some of these aspects, e.g. they may prefer model fairness to accuracy.

In this paper, we frame the recourse problem under MM formally and propose different approaches to 
accommodate recourse as well as user preferences
. After covering related work (§\ref{sec:related}) and the necessary preliminaries (§\ref{sec:preliminaries}), we make the following contributions. In §\ref{sec:problem}, we purpose a formalisation of the problem and 
several 
desirable properties for ensembling methods for recourse under MM. We also consider two natural extensions of \method{} to accommodate generation of CEs and show that they may violate some of the properties we define. We then propose \textit{argumentative ensembling} in §\ref{sec:main}, a novel technique rooted in computational argumentation (see \cite{AImagazine17,handbook} for overviews). We show that it is able to solve the recourse problem effectively while also naturally incorporating user preferences, a problem in which the conflicts between different objectives quickly become cognitively intractable
. We present extensive experiments  in §\ref{sec:evaluation},
where we show that our framework always provides valid recourse under MM, showing better evaluations on a number of metrics
and without compromising the individual prediction accuracies. We also demonstrate the usefulness of specifying user preferences in our framework.
We then conclude 
in §\ref{sec:conclusion}. 

This report is an extended version of \cite{aamas24}, including supplementary material. The implementation is publicly available at  \url{https://github.com/junqi-jiang/recourse_under_model_multiplicity}.

\section{Related Work}
\label{sec:related}


{\bf Model Multiplicity.} MM has been shown to affect several 
dimensions of trustworthy ML. In particular, among equally accurate models, there could 
be 
different fairness characteristics \cite{wick2019unlocking,dutta2020there,rodolfa2021empirical,coston2021characterizing}, levels of 
interpretability \cite{chen2018interpretable,rudin2019stop,semenova2022existence}, model robustness evaluations \cite{d2022underspecification} and even inconsistent explanations~\cite{dong2019variable,fisher2019all,mehrer2020individual,blackconsistent,ley2023consistent,marx2023but,LeofanteBR23}. 

Recent attempts have been made to 
address the MM problem. \cite{Black_22} suggested candidate models should be evaluated across additional dimensions other than accuracy (e.g., robustness or fairness evaluation thresholds). They provided a potential solution based on applying meta-rules to filter out undesirable models, then use ensemble methods to aggregate them, or randomly select one of them. Extending these ideas, the selective ensembling method of \cite{black2022selective} embeds statistical testing into the ensembling process, such that when the numbers of candidate models predicting an input to the top two classes are close or equal, which could happen under naive ensembling strategies like majority voting, an abstention signal can be flagged for relevant stakeholders. \cite{xin2022exploring} looked at decision trees and proposed an algorithm to enumerate all models obtainable under MM; \cite{roth2023reconciling} instead proposed a model reconciling procedure to resolve the conflicts between two disagreeing models.  Meanwhile, a number of works \cite{Marx_20,watson2023predictive,hsu2022rashomon,semenova2022existence} propose metrics to quantify the extent of MM in prediction tasks.



{\bf Counterfactual Explanations and MM.} A CE for a prediction of an input by an ML model is typically defined as another data point that minimally modifies the input such that the ML would yield a desired classification~\cite{Tolomei_17,Wachter_17}. CEs are often advocated as a means to provide algorithmic recourse for data subjects, and as such, modern algorithms have been extended to enforce additional desirable properties such as~\textit{actionability}~\cite{ustun2019actionable}, \textit{plausiblity}~\cite{dhurandhar2018explanations} and \textit{diversity}~\cite{mothilal2020explaining}. We refer to \cite{guidotti2022counterfactual} for a recent overview.

More central to the MM problem, \cite{pawelczyk2022uai} pointed out that CEs on data manifold are more likely to be robust under MM than minimum-distance CEs. \cite{LeofanteBR23} proposed an algorithm for a given ensemble of feed-forward neural networks to compute robust CEs that are provably valid for all models in the ensemble. Also related to MM is the line of work that focuses on the robustness of CEs against model changes, e.g., parameter updates due to model retraining on the same or slightly shifted data distribution \cite{upadhyay2021towards,pmlr-v162-dutta22a,blackconsistent,nguyen2022robust,bui2022counterfactual,robovertime,oursaaai23,hamman2023robust,jiang2023provably}. These studies usually aim to generate CEs that are robust across retrained versions of the same model, which is different from MM where several (potentially structurally different) models are targeted together.


\textbf{Computational Argumentation.}
Argumentation, as understood in AI, 
inspired by the seminal 
\cite{Dung_95}, is a set of formalisms for 
dealing with 
conflicting information
, as demonstrated in numerous application areas, e.g. online debate \cite{Cabrio_13}, scheduling \cite{Cyras_19} and judgmental forecasting \cite{Irwin_22}.
There have also been a broad range of works (see \cite{Cyras_21,Vassiliades_21} for overviews) demonstrating argumentation's capability for explaining the outputs of AI models, e.g. neural networks \cite{Potyka_21,Dejl_21}, Bayesian classifiers \cite{Timmer_15} and random forests \cite{Potyka_23}.
To the best of our knowledge the only work in which argumentation has been applied to MM specifically is~\cite{Abchiche-Mimouni_23}, where the introduced method extracts and selects winning rules from an ensemble of classifiers, leveraging on argumentation's strengths in providing explainability for this process. This differs from our method as we consider pre-existing CEs computed for single models, rather than using rules as explanations, and we consider preferences in the ensembling.





\section{Preliminaries}
\label{sec:preliminaries}

Given 
a set of classification \emph{labels} $\Classes$, a \emph{model}  is a mapping $M: \mathbb{R}^n \rightarrow \Classes$; we denote that $\model$ classifies an \emph{input} $\inn \in \mathbb{R}^n$ as $\class 
$ iff  $\model(\inn) = \class$. 
In this paper, we focus on binary classification, i.e. $\Classes = \{ 0, 1 \}$, for simplicity, though our approach is extensible to multi-class problems.
Then, a \emph{counterfactual explanation} (CE) for $\inn$, given $\model$, is some $\cfx \in \mathbb{R}^n \setminus \{ \inn \}$ such that $\model(\cfx) \neq \model(\inn)$, which may be optimised by some distance metric between the inputs.

A \emph{bipolar argumentation framework} (BAF) \cite{Cayrol_05} is a tuple $\langle \Args, \Atts, \Supps \rangle$, where $\Args$ is a set of \emph{arguments}, $\Atts \subseteq \Args \times \Args$ is a directed relation of \emph{direct attack} and $\Supps \subseteq \Args \times \Args$ is a directed relation of \emph{direct support}.
Given a BAF $\langle \Args, \Atts, \Supps \rangle$, for any $\arga \in \Args$, we refer to $\Atts(\arga) = \{ \argb | (\argb, \arga) \in \Atts \}$ as \emph{$\arga$'s direct attackers} and to $\Supps(\arga) = \{ \argb | (\argb, \arga) \in \Supps \}$ as \emph{$\arga$'s direct supporters}.
Then, an \emph{indirect attack} from $\arg_x$ on $\arg_y$ is a sequence $\arg_1, \rel_1, \ldots, \rel_{n-1}, \arg_n$, where $n \geq 3$, $\arg_1 = \arg_x$, $\arg_n = \arg_y$, $\rel_{1} \in \Atts$ and $\rel_i \in \Supps$ $\forall i \in \{ 2, \ldots, n - 1 \}$.
Similarly, a \emph{supported attack} from $\arg_x$ on $\arg_y$ is a sequence $\arg_1, \rel_1, \ldots, \rel_{n-1}, \arg_n$, where $n \geq 3$, $\arg_1 = \arg_x$, $\arg_n = \arg_y$, $\rel_{n-1} \in \Atts$ and $\rel_i \in \Supps$ $\forall i \in \{ 1, \ldots, n - 2 \}$.
Straightforwardly, a supported attack on an argument implies there is a direct attack on an argument.

We will also use notions of acceptability of sets of arguments in BAFs~\cite{Cayrol_05}.
A set of arguments $X \subseteq \Args$, also called an \emph{extension}, is said to \emph{set-attack} any $\arga \in \Args$ iff there exists an attack (whether direct, indirect or supported) from some $\argb \in X$ on $\arga$. 
Meanwhile, $X$ is said to \emph{set-support} any $\arga \in \Args$ iff there exists a direct support from some $\argb \in X$ on $\arga$.\footnote{
In \cite{Cayrol_05}, set-supports are defined 
via 
sequences of supports, 
which we do not use 
here.}
Then, a set $X \subseteq \Args$ \emph{defends} any $\arga \in \Args$ iff $\forall \argb \in \Args$, if $\{ \argb \}$ set-attacks $\arga$ then $\exists \argc \in X$ such that $\{ \argc \}$ set-attacks $\argb$.
Any set $X \subseteq \Args$ is then said to be \emph{conflict-free} iff $\nexists \arga, \argb \in X$ such that $\{ \arga \}$ set-attacks $\argb$, and \emph{safe} iff $\nexists \argc \in \Args$ such that $X$ set-attacks $\argc$ and either: $X$ set-supports $\argc$; or $\argc \in X$. (Note that a safe set is guaranteed to be conflict-free.)
The notion of a set $X \subseteq \Args$ being \emph{d-admissible} (based on \emph{admissibility} in \cite{Dung_95}) is then introduced, requiring that $X$ is conflict-free and defends all of its elements. 
This notion is extended to account for safe sets: $X$ is said to be \emph{s-admissible} iff it is safe and defends all of its elements. 
(Note that an s-admissible set is guaranteed to be d-admissible.)
It is also extended for the notion of \emph{external coherence}~\cite{Cayrol_05}: $X$ is said to be \emph{c-admissible} iff it is conflict-free, closed for $\Supps$ and defends all of its elements.
Finally, a set $X \subseteq \Args$ is said to be \emph{d-preferred} (\resp, \emph{s-preferred}, \emph{c-preferred}) iff it is d-admissible (\resp, s-admissible, c-preferred) and maximal wrt set-inclusion.



\section{Recourse under Model Multiplicity}
\label{sec:problem}
 

As mentioned in 
§\ref{sec:intro}, a common way to deal with MM in practice is to employ ensembling techniques, where the prediction outcomes of several models are aggregated to produce a single outcome. Aggregation can be performed in different ways, as discussed in~\cite{Black_22,black2022selective}. In the following, we formalise a notion of \textit{naive ensembling}, adopted in~\cite{Black_22}, and also known as \textit{majority voting}, which will serve as a baseline for our analysis.\footnote{It should be noted that in \cite{Black_22}, the case where there is no majority is not discussed.}



\begin{definition}
\label{def:naive_ensembling}
    Given an input $\inn$, a set of models $\Models$ and a set of labels $\Classes$, we define the set of \emph{top labels} $\Classes_{max} \subseteq \Classes$ as:
    
    \hspace*{1cm} \(
        \Classes_{max} = argmax_{\class_i \in \Classes} | \{ \model_j \in \Models | \model_j(\inn) = \class_i \} |. \nonumber
   \)
   \\
    Then we then use $\AggModels(\inn) \in \Classes_{max}$ to denote the aggregated classification by \emph{naive ensembling}. 
    In the cases where $| \Classes_{max} | > 1$, we select $\AggModels(\inn)$ from $\Classes_{max}$ randomly
    .
    With an abuse of notation, we also let $\AggModels = \{ \model_j \in \Models | \model_j(\inn) = \AggModels(\inn) \}$ denote the set of models that agree on the aggregated classification.
\end{definition}
%
Coming back to our loan example where $\model_1$ and $\model_2$ reject the loan ($\model_1(\inn) = \model_2(\inn) = 0$) while $\model_3$ accepts it ($\model_3(\inn) = 1$), we obtain $\AggModels(\inn) = 0$ and $\AggModels = \{ \model_1, \model_2 \}$. 
Naive ensembling is known to be an effective strategy to mediate conflicts between models and is routinely used in practical applications. However, in this paper, we take an additional step and aim to generate CEs providing recourse for a user that has been impacted by $\AggModels(\inn)$. Recent work by~\cite{LeofanteBR23} has shown that standard algorithms designed to generate CEs for single models typically fail to produce recourse recommendations that are robust across $\AggModels$. One natural idea to address this would be to extend naive ensembling to account for CEs. Next, we formalise this idea 
in terms of several properties that are important in this setting. We then discuss two concrete methods 
extending naive ensembling and 
analyse them in terms of the properties we define.

\subsection{Problem Statement and Desirable Properties}
\label{ssec:desirable_properties}

Consider a non-empty set of models $\Models \!\!=\!\!\{\model_1, \ldots, \model_m\!\}$ and, for an input $\inn$, 
assume a set 
$\CFXs(\inn)\!=\!\{\cfx_1, \ldots, \cfx_m\}$ where 
each $\cfx_i \in \CFXs(\inn)$ is 
a CE for $\inn$, given $\model_i$. In the rest of the paper, wherever it is clear that we refer to a given $\inn$, we use $\CFXs$ and omit its dependency on $\inn$ for readability.
Our aim is to solve the problem 
outlined below. 

\myprob{\ProblemFull{} (\Problem{})}{input $\inn$, set  $\Models$ of models, set $\CFXs$ of CEs}{``optimal'' set $S \subseteq \Models \cup \CFXs$ of models and CEs.}

 To characterise 
optimality, we propose a number of desirable properties for the outputs of ensembling methods. We refer to these outputs as \emph{solutions} for \Problem{}. The 
most basic 
requirement 
requires  
that both models and CEs in the output are non-empty.
\begin{definition}
\label{def:non-emptiness}
    An ensembling method satisfies \emph{non-emptiness} iff for any given input $\inn$, set  $\Models$ of models and set $\CFXs$ of CEs, any 
    solution 
    $S \subseteq \Models \cup \CFXs$ is such that $S \cap \Models \neq \emptyset$ and $S \cap \CFXs \neq \emptyset$. 
\end{definition}

Specifically, 
non-emptiness ensures that 
the \Problem{} 
method returns 
some models and some CEs. 
We then look to ensure that the ensembling method returns a non-trivial set of models, as formalised 
by the next property.

\begin{definition}\label{def:non-triviality}
    An ensembling method satisfies \emph{non-triviality} iff for any given input $\inn$, set  $\Models$ of models and set $\CFXs$ of CEs, any 
    solution $S \subseteq \Models \cup \CFXs$ is such that $| S \cap \Models | > 1$. 
\end{definition}

Clearly, the 
returned models should not disagree amongst themselves on the classification, which leads to our next requirement.



\begin{definition} \label{def:model_agreement}
    An ensembling method satisfies \emph{model agreement} iff for any given input $\inn$, set  $\Models$ of models and set  $\CFXs$ of CEs, any 
    solution $S \subseteq \Models \cup \CFXs$ is such that $\forall \model_i, \model_j \in S \cap \Models$, $\model_i(\inn) = \model_j(\inn)$. 
\end{definition}


The next property, which itself requires model agreement to be satisfied, checks 
whether the 
set of returned models is among the largest of the agreeing sets of models, a motivating property of naive ensembling.

\begin{definition}\label{def:majority_vote}
    An ensembling method satisfies \emph{majority vote} iff it satisfies model agreement and for any given input $\inn$, set  $\Models$ of models, set $\CFXs$ of CEs  and set $\Classes$ of labels, any 
    solution $S \subseteq \Models \cup \CFXs$ is such that, letting $\class_i = \model_j(\inn)$ $\forall \model_j \in S \cap \Models$, $\nexists \class_k \in \Classes \setminus \{ \class_i \}$ such that $| \{ \model_l \in \Models | \model_l(\inn) = \class_k \} | > | \{ \model_
    {l}\in \Models | \model_
    {l}(\inn) = \class_i \} |$.
\end{definition}

Next, we consider the robustness of recourse. Previous work~\cite{LeofanteBR23} considered a very conservative notion of robustness whereby explanations are required to be valid for all models in $\Models$. While this might be desirable in some cases, we highlight that satisfying this property may not always be feasible in practice. We therefore propose a relaxed notion of robustness, which requires that CEs are valid only for the models that support them.

\begin{definition}\label{def:counterfactual_validity}
    An ensembling method satisfies \emph{counterfactual validity} iff for any given input $\inn$, set  $\Models$ of models and set $\CFXs$ of CEs, any 
    solution $S \subseteq \Models \cup \CFXs$ is such that $\forall \model_i \in S \cap \Models$ and $\forall \cfx_j \in S \cap \CFXs$, $\model_i(\cfx_j) \neq \model_i(\inn)$.
\end{definition}

While counterfactual validity is a fundamental requirement for any sound ensembling method, one also needs to ensure that the solutions it generates are coherent, as formalised below. 

\begin{definition}\label{def:explanation-coherence}
    An ensembling method satisfies \emph{counterfactual coherence} iff for any given input $\inn$, set $\Models$ of models and set  $\CFXs$ of CEs, any 
    solution $S \subseteq \Models \cup \CFXs$
    , where $\Models = \{ \model_1, \ldots, \model_m \}$ and $\CFXs = \{ \cfx_1, \ldots, \cfx_m \}$, 
    is such that $\forall i \in \{ 1, \ldots, m \}$, 
    $\model_i \in S$ iff $\cfx_i \in S$.
\end{definition}

Intuitively coherence requires that \textit{(i)} 
a CE is returned only if it is supported by a model and \textit{(ii)} when the 
CE is chosen, then its corresponding model must be part of the support. This ultimately guarantees strong justification as to why a given recourse is suggested since selected models and their reasoning (represented by their CEs) are assessed in tandem. 

The properties defined above may not be all satisfiable at the same time in practice, therefore heuristic approaches might be needed. Next, we discuss two such heuristics to extend naive ensembling towards solving \Problem.

\subsection{
Extending Naive Ensembling for Recourse}
\label{ssec:naive}

We now present two strategies that leverage naive ensembling to solve \Problem{}. In particular, we use the relationship between models in the ensemble $\AggModels$ and their corresponding CEs as follows.

 \begin{definition}
\label{def:baselinecfxs}
\label{def:cfx_selection_methods}
    Consider an input $\inn$, a set $\Models$ of models and a set  $\CFXs$ of CEs. Let $\AggModels \subseteq \Models$ be the set of models obtained by naive ensembling. We define 
        the set of \emph{naive 
        CEs} 
        as:
        
       \hspace*{2cm}  \(
                    \CFXs^n = \{ \cfx_i \in  \CFXs \mid \model_i \in \AggModels \};
       \)
       \\
        and the set of \emph{valid 
        CEs} as:
        
       \hspace*{0.2cm} \(
            \CFXs^{v} = \{ \cfx_i \in  \CFXs \mid \model_i \in \AggModels 
            \wedge \forall \model_j \in \AggModels, \model_j(\cfx_i) \neq \model_j(\inn) \}.
        \)
        \\
    Then, two possible solutions to \Problem{} are $S^n = \AggModels \cup \CFXs^n$, named \emph{augmented ensembling}, or $S^v = \AggModels \cup \CFXs^{v}$, named \emph{robust ensembling}.
\end{definition}

Intuitively, 
augmented ensembling suggests 
taking all the CEs in $\CFXs$ that correspond to the models 
in $\AggModels$. 
Meanwhile, robust ensembling extends augmented ensembling by enforcing the additional constraint that CEs are selected only if they are valid for all models in $\AggModels$. We now provide an illustrative example to clarify the results produced by the two strategies.

\begin{example}
\label{ex:baselines_example}
    Consider 
    $\Models = \{ \model_1, \model_2, \model_3, \model_4, \model_5 \}$ and an input $\inn$ such that $\model_1(\inn) = \model_2(\inn) = \model_3(\inn) = 0$ and $\model_4(\inn) = \model_5(\inn) = 1$. Let $\CFXs = \{\cfx_1, \cfx_2, \cfx_3, \cfx_4, \cfx_5\}$ be the set of CEs generated for $\inn$, i.e. $\model_1(\cfx_1) = \model_2(\cfx_2) = \model_3(\cfx_3) = 1$, while $\model_4(\cfx_4) = \model_5(\cfx_5) = 0$. Applying naive ensembling to $\Models$ yields $\AggModels = \{\model_1, \model_2, \model_3\}$ and $\AggModels(\inn)=0$. Then, the set of naive CEs is $\CFXs^n = \{\cfx_1, \cfx_2, \cfx_3\}$, and thus augmented ensembling gives $S^n = \{\model_1, \model_2, \model_3, \cfx_1, \cfx_2, \cfx_3\}$. 
    Now, assume that $\cfx_1$ is invalid for $\model_2$ (i.e. $\model_2(\cfx_1) = 0$), $\cfx_2$ is invalid for $\model_1$, $\cfx_3$ is invalid for $\model_2$, and all three CEs are otherwise valid for all models in $\AggModels$. Then, then the set of valid CEs 
    is empty, i.e. $\CFXs^{v} = \emptyset$, and thus robust ensembling gives $S^v = \{\model_1, \model_2, \model_3 \}$.
\end{example}

This example shows that both methods host major drawbacks: augmented ensembling may produce CEs which are invalid and thus it is not robust to  MM, while robust ensembling is prone to returning 
no CEs. 
We now present a theoretical analysis to assess the extent to which augmented and robust ensembling are able to satisfy the properties given in Definitions~\ref{def:non-emptiness} to \ref{def:explanation-coherence}.

\begin{theorem}
\label{thm:s1}
    Augmented ensembling satisfies non-emptiness, model agreement, majority vote and counterfactual coherence. It satisfies non-triviality if $|\Models| \!>\! 2$. It does not satisfy counterfactual validity.
\end{theorem}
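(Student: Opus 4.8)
The plan is to check each named property directly against the definition of augmented ensembling --- $S^n = \AggModels \cup \CFXs^n$ with $\CFXs^n = \{\cfx_i \in \CFXs \mid \model_i \in \AggModels\}$ --- handling the four unconditional positive claims first, then non-triviality under the hypothesis $|\Models| > 2$, and finally refuting counterfactual validity by exhibiting a single instance.

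For the positive part the key observations are all short. Since $\Models$ is non-empty and finite, $\Classes_{max}$ is a non-empty argmax, so $\AggModels(\inn)$ is well defined and $\AggModels = \{\model_j \mid \model_j(\inn) = \AggModels(\inn)\} \neq \emptyset$; hence $S^n \cap \Models = \AggModels \neq \emptyset$, and because every $\model_i \in \AggModels$ contributes its associated $\cfx_i$ to $\CFXs^n$, also $S^n \cap \CFXs = \CFXs^n \neq \emptyset$, which gives non-emptiness. Model agreement is immediate, since by construction each $\model_j \in \AggModels$ classifies $\inn$ as $\AggModels(\inn)$. Majority vote then holds because model agreement is established and $\AggModels(\inn) \in \Classes_{max}$ means no label is assigned to strictly more models in $\Models$ than $\AggModels(\inn)$ is. Counterfactual coherence follows from a chain of equivalences: for each index $i$, $\model_i \in S^n \iff \model_i \in \AggModels \iff \cfx_i \in \CFXs^n \iff \cfx_i \in S^n$.

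For non-triviality, it remains to show $|\AggModels| > 1$ whenever $|\Models| > 2$. Here I would invoke $\Classes = \{0,1\}$: the models predicting $0$ and the models predicting $1$ split $\Models$ into two disjoint parts, so the larger part has size at least $\lceil |\Models|/2 \rceil$; since $\AggModels(\inn)$ attains the maximum count, $|\AggModels| \geq \lceil |\Models|/2 \rceil \geq 2$ as soon as $|\Models| \geq 3$, and this covers the strict-majority and the tie cases uniformly.

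For the negative claim, a single witnessing instance suffices, and Example~\ref{ex:baselines_example} already provides one: there $\model_2 \in S^n$ and $\cfx_1 \in S^n$, yet $\model_2(\cfx_1) = 0 = \model_2(\inn)$, violating counterfactual validity. A minimal witness also works --- two models agreeing on $\inn$, their own-model-valid CEs, and one CE invalid for the other model --- which is consistent with the setting since $\cfx_i$ is only required to be a CE for $\model_i$. I do not anticipate a genuine obstacle; the only points requiring care are phrasing the non-triviality pigeonhole argument so that it handles ties, and making sure the counterfactual-validity counterexample respects the standing assumption that each $\cfx_i$ is a CE for $\model_i$ (hence valid for $\model_i$, though not necessarily for the other models).
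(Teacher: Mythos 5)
Your proof is correct and follows essentially the same route as the paper's: each positive property is verified directly from Definitions~\ref{def:naive_ensembling} and~\ref{def:baselinecfxs}, counterfactual validity is refuted via Example~\ref{ex:baselines_example}, and non-triviality rests on the same binary-classification counting argument (you phrase it as a direct pigeonhole where the paper argues by contradiction, but the underlying idea is identical). No gaps.
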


\begin{proof}
    By Def
    .~\ref{def:naive_ensembling}, it can be seen by inspection that $|\AggModels|\!\!>\!\! 0$. Thus, non-emptiness is satisfied.
    Again 
    by inspection of the same definition, 
    $\forall \model_i, \model_j \in \AggModels$, $\model_i(\inn) = \model_j(\inn)$. Thus, model agreement is satisfied. 
    We can also see that $\nexists \class_i \in \Classes \setminus \{ \AggModels(\inn) \}$ such that $| \{ \model_j \in \Models | \model_j(\inn) = \class_i \} | > | \AggModels |$. Thus, majority vote is satisfied. 
    By Defs
    .~\ref{def:naive_ensembling} and \ref{def:baselinecfxs}, it can be seen that $\forall \model_i \in \Models$ and $\forall \cfx_i \in \CFXs$, $\model_i \in S$ iff $\cfx_i \in S$. Thus, counterfactual coherence is satisfied.

    Example \ref{ex:baselines_example}
     shows that counterfactual validity is not satisfied by providing a counterexample.
    
    Finally, the partial satisfaction of non-triviality can be proven by contradiction: assume $|\Models| \!=\! n$,  $n \!>\! 2$ but $|\AggModels| \!=\! 1$. By Def
    .~\ref{def:naive_ensembling}, $\AggModels$ is the largest subset of $\Models$ containing models with the same classification outcome. However, for binary classification, this implies that the remaining $n-1$ all agree on the opposite classification, i.e. $|\Models \!\setminus\! \AggModels| \!>\! |\AggModels| $, which leads to a contradiction.
\end{proof}

\begin{theorem}
\label{thm:s2}
    Robust ensembling satisfies model agreement, majority vote and counterfactual validity. It satisfies non-triviality if $|\Models| > 2$. It does not satisfy non-emptiness or counterfactual coherence.
\end{theorem}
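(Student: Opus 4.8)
The plan is to mirror the proof of Theorem~\ref{thm:s1}, exploiting that robust ensembling returns $S^v = \AggModels \cup \CFXs^{v}$, so that $S^v \cap \Models = \AggModels$, exactly as for augmented ensembling. First I would dispatch every property that concerns only the model side of the solution, since these are inherited verbatim. By inspection of Definition~\ref{def:naive_ensembling}, any $\model_i, \model_j \in \AggModels$ satisfy $\model_i(\inn) = \model_j(\inn)$, giving model agreement (Definition~\ref{def:model_agreement}); since $\AggModels$ is by construction a largest set of models agreeing on a common label, there is no $\class_k \in \Classes$ assigned to strictly more models than $\AggModels(\inn)$, giving majority vote (Definition~\ref{def:majority_vote}); and the contradiction argument from Theorem~\ref{thm:s1} applies unchanged — if $|\Models| > 2$ yet $|\AggModels| = 1$, then in binary classification the remaining $n-1$ models all agree on the opposite label, so $|\Models \setminus \AggModels| > |\AggModels|$, contradicting maximality of $\AggModels$ — giving non-triviality (Definition~\ref{def:non-triviality}) whenever $|\Models| > 2$.

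Next I would establish counterfactual validity, which for robust ensembling is immediate from the definition of the valid CEs $\CFXs^{v}$ in Definition~\ref{def:cfx_selection_methods}: membership $\cfx_i \in \CFXs^{v}$ already requires $\model_j(\cfx_i) \neq \model_j(\inn)$ for every $\model_j \in \AggModels$. Hence for any $\model_i \in S^v \cap \Models = \AggModels$ and any $\cfx_j \in S^v \cap \CFXs = \CFXs^{v}$ we obtain $\model_i(\cfx_j) \neq \model_i(\inn)$, which is exactly Definition~\ref{def:counterfactual_validity}.

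Finally, for the two negative claims I would reuse Example~\ref{ex:baselines_example}. There, under the stated invalidities, the set of valid CEs is $\CFXs^{v} = \emptyset$, so $S^v = \{\model_1, \model_2, \model_3\}$. Since $S^v \cap \CFXs = \emptyset$, non-emptiness (Definition~\ref{def:non-emptiness}) fails; and since $\model_1 \in S^v$ while $\cfx_1 \notin S^v$, the biconditional required by counterfactual coherence (Definition~\ref{def:explanation-coherence}) fails. Both are thus refuted by a single counterexample.

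I do not anticipate a substantive obstacle: every positive claim is either a direct inspection of Definitions~\ref{def:naive_ensembling} and \ref{def:cfx_selection_methods} or a verbatim transfer from Theorem~\ref{thm:s1}. The only point needing a little care is checking that the negative claims are not vacuous — that $\CFXs^{v}$ can genuinely be empty while $\AggModels \neq \emptyset$, and that nothing in the definition of $\CFXs^{v}$ forces $\cfx_i \in \CFXs^{v}$ whenever $\model_i \in \AggModels$ (it does not, precisely because of the validity conjunct) — and Example~\ref{ex:baselines_example} settles this by explicit construction.
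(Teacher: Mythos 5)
Your proposal is correct and follows essentially the same route as the paper: the model-side properties (model agreement, majority vote, conditional non-triviality) are inherited from Theorem~\ref{thm:s1} because $S^v \cap \Models = \AggModels$, counterfactual validity is read off directly from the definition of $\CFXs^{v}$, and Example~\ref{ex:baselines_example} refutes non-emptiness and counterfactual coherence. The paper simply omits the inherited arguments as ``analogous,'' whereas you spell them out; no gap either way.
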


\begin{proof}
    The proofs for model agreement, majority vote and non-triviality are analogous to those in Theorem \ref{thm:s1} and so are omitted.

    It can be seen by inspection of Definition \ref{def:baselinecfxs} that $\forall \!\model_i \!\in\! \AggModels$ and $\forall \cfx_j \!\in \!\CFXs^v$, $\model_i(\cfx_j) \!\neq \!\model_i(\inn)$. Thus, counterfactual validity is satisfied.

    Example \ref{ex:baselines_example} shows that non-emptiness and counterfactual coherence are not satisfied by providing a counterexample.
\end{proof}

These results, summarised in Table \ref{table:props},
demonstrate that there may exist cases in which both augmented and robust ensembling fail to solve \Problem{} satisfactorily. This has strong implications on the quality of the results obtained in practice, as we will show experimentally in §\ref{sec:evaluation}. Further, these methods provide no way to take into account users' preferences over the models. 
As previously mentioned (see §\ref{sec:intro} and §\ref{sec:related}), there could be different characteristics among the models in $\Models$ in terms of meta-evaluation aspects, e.g. a model's fairness, robustness, and simplicity. Depending on the task, a model's fairness might be specified as being more important than its robustness or simplicity. In such cases, it would be desirable to have a principled way to rank models according to the preference specification, as promoted in \cite{Black_22}. 
The combination of these deficiencies motivates the need for a richer ensembling framework to solve \Problem{} while incorporating user preferences, 
given next.

\begin{table}[t]
    \begin{center}
    \resizebox{\columnwidth}{!}{
    \begin{tabular}{cccc}
    \cline{2-4}
     &
    $\AggModels \cup \CFXs^{n}$ &
    $\AggModels \cup \CFXs^{v}$ &
    $\Models^a \cup \CFXs^{a}$ \\
    \hline
    non-emptiness &
    $\checkmark$ &
     &
    $\checkmark$ \\
    non-triviality &
    $\checkmark^*$ &
    $\checkmark^*$ &
    $\checkmark^*$ \\
    model agreement &
    $\checkmark$ &
    $\checkmark$ &
    $\checkmark$ \\
    majority vote &
    $\checkmark$ &
    $\checkmark$ &
     \\
    counterfactual validity &
     &
    $\checkmark$ &
    $\checkmark$ \\
    counterfactual coherence &
    $\checkmark$ &
     &
    $\checkmark$ \\
    \hline
    \end{tabular}
    }
    \end{center}
    \protect\caption{Augmented ($\AggModels \cup \CFXs^{n}$) and robust ($\AggModels \cup \CFXs^{v}$) ensembling, as well as our argumentative approach ($\Models^a \cup \CFXs^{a}$, defined in §\ref{sec:main}), assessed against the desirable properties defined in §\ref{ssec:desirable_properties}. Satisfaction of a property is 
    shown
    by $\checkmark$, while partial satisfaction under given conditions is 
    shown
    by $\checkmark^*$.} 
    
    \label{table:props}
    \end{table}


\section{Argumentative Ensembling}
\label{sec:main}

We now present our method for ensembling models and CEs which inherently supports specifying preferences over models; then we undertake a theoretical analysis of its properties.

\subsection{Definition}

We start by formalising 
ways to incorporate the 
aforementioned preferences over models.
These preferences could be obtained by any information, e.g. meta-rules over 
models as suggested in \cite{Black_22}, but we will assume that they are extracted wrt 
properties of the models, e.g. their accuracy or (a metric representing) their simplicity.

\begin{definition}
\label{def:prop}
     Given a set  $\Models$ of models, a set $\Props$ of \emph{properties} is such that $\forall \prop \in \Props$, $\prop: \Models \rightarrow \mathbb{R}$ is a total function.
\end{definition} 

We then define a preference over the properties such that users can impose a ranking of priority over them. Here and onwards, for simplicity we use total orders, denoted $\preceq$, over any set $S$ such that, as usual, for any $s_i, s_j \in S$, $s_i \prec s_j$ iff $s_i \preceq s_j$ and $s_i \not{\succeq} s_j$. Also as usual, we say that $s_i \simeq s_j$ iff $s_i \preceq s_j$ and $s_i \succeq s_j$.

\begin{definition}
\label{def:proppref}
    Given a set  $\Props$ of properties, a \emph{property preference} $\ppreceq$ is a total order over $\Props$.
\end{definition}

Model preferences 
can be defined using property preferences.
In the following example, we define one way for 
doing so.

\begin{example}
\label{ex:CEs}
    Consider the same set of models as in Example \ref{ex:baselines_example} and a set of properties $\Props \!\!=\!\! \{ \prop_1, \prop_2 \}$ 
    where $\prop_1, \prop_2$ represent 
    model accuracy and simplicity, \resp, with a property preference $\ppreceq$ such that $\prop_1 \!\!\psucc\!\! \prop_2$ and values for the properties as follows. 

    \begin{table}[h]
    \begin{center}
    \begin{tabular}{cccccc}
    \cline{2-6}
    &
    $\model_1$ &
    $\model_2$ &
    $\model_3$ &
    $\model_4$ &
    $\model_5$ \\
    \hline
    $\prop_1$ (accuracy) &
    0.85 &
    0.87 &
    0.86 &
    0.86 &
    0.87 \\
    $\prop_2$ (simplicity) &
    0 &
    0.75 &
    1 &
    0.5 &
    0.75 \\
    \hline
    \end{tabular}
    \end{center}
    \label{table:example}
    \end{table}
    
    A simple model preference $\mpreceq$ over $\Models$ may be such that, for any $\model_i, \model_j \in \Models$, $\model_i \msucc \model_j$ iff:
        (i) $\prop_1(\model_i) > \prop_1(\model_j)$; or (ii) $\prop_1(\model_i) = \prop_1(\model_j)$ and $\prop_2(\model_i) > \prop_2(\model_j)$.
    This $\mpreceq$ is a total order over $\Models$ and results in 
    $\model_2 \msimeq \model_5 \msucc \model_3 \msucc \model_4 \msucc \model_1$.
\end{example}

Other ways to define preferences over models from preferences over properties of models can be defined, e.g. based on more sophisticated notions of dominance. We will assume some given notion of total preference over models, as follows, ignoring how it is obtained. 

\begin{definition}
\label{def:modelpref}
    Given a set  $\Models$ of models, 
    a \emph{model preference} $\mpreceq$ over $\Models$ is a total order over $\Models$.
\end{definition}


How can we incorporate these model preferences into the ensembling, while still satisfying the properties defined in §\ref{ssec:desirable_properties}? To tackle this problem, we 
use bipolar argumentation as follows.\footnote{We considered using abstract AFs \cite{Dung_95}, but we found that bipolar AFs are more suitable, given that models and CEs can be naturally seen as supporting one another.}

\begin{definition}
\label{def:BAF}
    The \emph{BAF corresponding to} 
    input $\inn$, 
    set  $\Models 
    $ of models, 
    set  $\CFXs 
    $ of CEs and 
    model preference $\mpreceq\!$ 
     is $\langle \Args, \Atts, \Supps \rangle$ 
    with:
    \begin{itemize}
        \item $\Args = \Models \cup \CFXs$;
        \item $\Atts \subseteq ( \Models \times \Models) \cup ( \Models \times \CFXs) \cup ( \CFXs \times \Models)$ where:
        \begin{itemize}
            %
            %
            \item 
            $\forall \model_i, \model_j \!\in \!\Models$, $(\model_i,\model_j) \!\in\! \Atts$ iff $\model_i(\inn) \!\!\neq \!\!\model_j(\inn)$
            , $\model_i \!\msucceq \!\model_j$;
            %
            %
            \item 
            $\forall \model_i \!\in\! \Models$ and 
            $\cfx_j \!\in\! \CFXs$ 
            where $\model_i(\cfx_j) \!= \!\model_i(\inn)$, $(\model_i, \cfx_j) \in \Atts$ iff $\model_i \msucceq \model_j$ and $(\cfx_j, \model_i) \in \Atts$ iff $\model_j \msucceq \model_i$;
            
            %
            %
        \end{itemize}
        \item $\Supps \subseteq (\Models \times \CFXs) \cup (\CFXs \times \Models)$ where for any $\model_i \in \Models$ and 
        $\cfx_j \in \CFXs$, $(\model_i,\cfx_j), (\cfx_j,\model_i) \in \Supps$ iff $i=j$.
    \end{itemize}
\end{definition}

Here, a model attacks another model if they disagree on the prediction and the latter is not strictly preferred to the former. This means that models which are outperformed with regards to preferences must be defended by more-preferred, agreeing models in order to be considered acceptable. Models and CEs are treated similarly, with the CEs inheriting the preferences from the models by which they were generated, and attacks being present between them when the model considers the CE invalid. This, along with the fact that models and their CEs support one another, ensures that the models are inherently linked to their reasoning, in the form of their CEs, and conflicts are drawn not only when two models' predictions differ, but also when their reasoning differs.

\begin{figure}[t]
    \centering
    \includegraphics[width=0.36\textwidth,trim={0.7cm 0.7cm 0.7cm 0.7cm},clip]{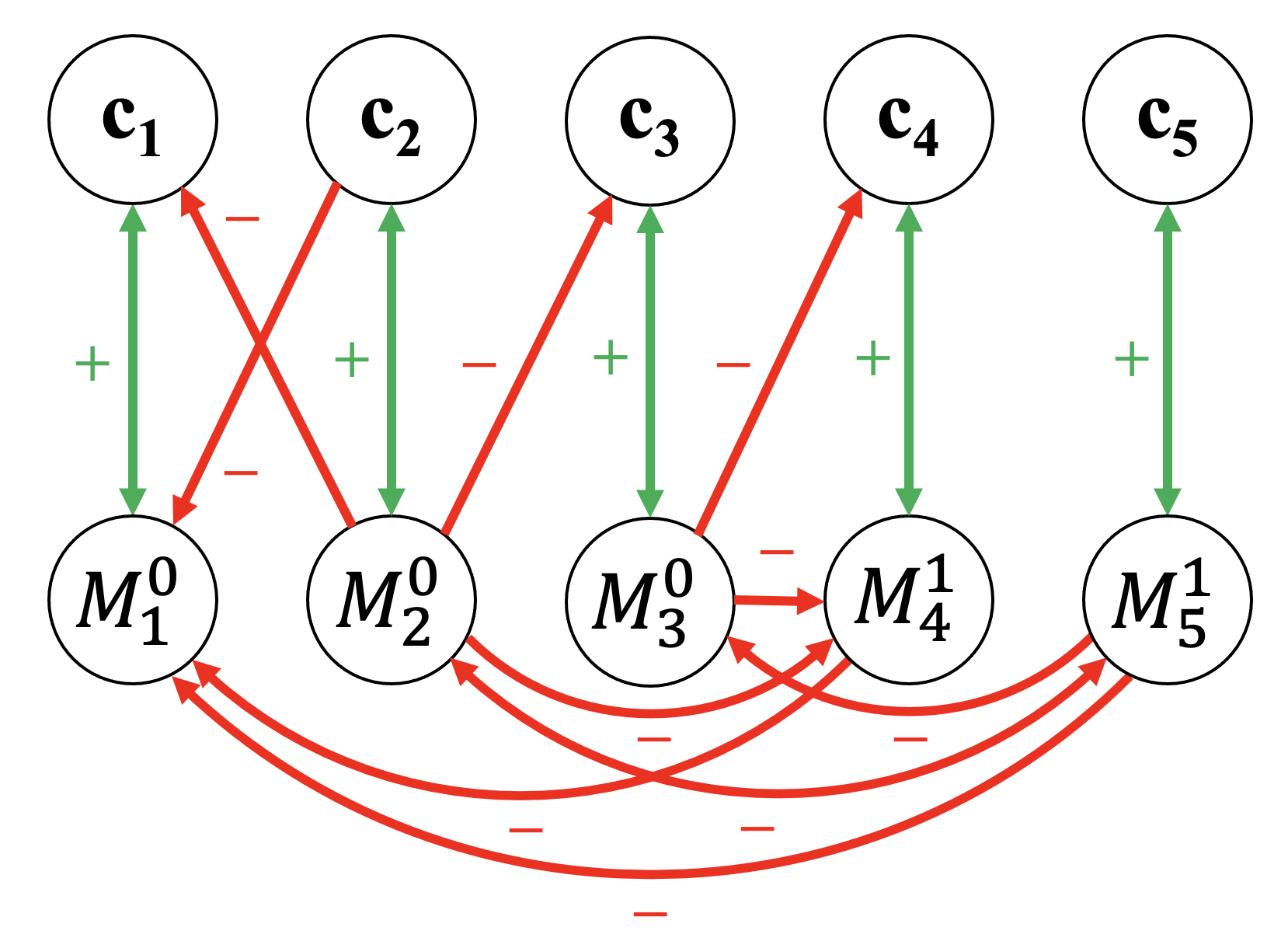}
	\caption{BAF for Example \ref{ex:arg} where: models' predictions for the input $\inn$ are given as superscripts, e.g. $\model_1(\inn) = 0$ but $\model_4(\inn) = 1$; reciprocal supports are represented by dual-headed green arrows labelled with $+$ and standard (reciprocal) attacks are represented by single-headed (dual-headed, \resp) red arrows labelled with $-$. 
    \label{fig:example_BAF}}
\end{figure}

Argumentative ensembling makes use of the set of s-preferred sets of arguments, referred to as $P_s$,  in the corresponding BAF $\langle \Args, \Atts, \Supps \rangle$, in order to resolve the 
MM problem.

\begin{definition}
\label{def:argensembling}
    Consider an input $\inn$, a set  $\Models$ of models, a set  $\CFXs$ of CEs, a set  $\Classes$ of labels and a model preference $\mpreceq$.
    Let the largest s-preferred sets for the corresponding BAF $\langle \Args, \Atts, \Supps \rangle$ be defined as:
    
    \hspace*{2cm} \(
        X_{max} = argmax_{X \in P_s} |X|. 
    \)
    \\
    Then, the solution to \Problem{} by \emph{argumentative ensembling} is defined as $S^a \in X_{max}$, where $\Models^a = S^a \cap \Models$ and $\CFXs^a = S^a \cap \CFXs$ and in the case of $| X_{max} | > 1$, we select $S^a$ from $X_{max}$ randomly.
    We also let $\Models^a(\inn) = \class_i \in \Classes$ where $\model_j(\inn) = \class_i$ $\forall \model_j \in \Models^a$.
\end{definition}

Note that, 
alternatively
, when 
$|X_{max}|>1$, 
we could choose to report all viable resulting ensembles in $X_{max}$ rather than a random one as defined above, so that more informed decisions could be made by relevant stakeholders. We leave this to future work.

The following example demonstrates how quickly the problem, when preferences are included, can become complex. This is the case even with only five models, far fewer than usual in MM.

\begin{example}
\label{ex:arg}
        The  BAF corresponding to input, models and CEs as in Example \ref{
        ex:CEs} is $\langle \Args, \Atts, \Supps \rangle$ with (see  Fig.~\ref{fig:example_BAF}):
    $\Args = $
    $\{ \model_1,$ 
    $\model_2,$ 
    $\model_3,$ 
    $\model_4,$ 
    $\model_5,$ 
    $\cfx_1,$ 
    $\cfx_2,$ 
    $\cfx_3,$ 
    $\cfx_4,$ 
    $\cfx_5 \}$; 
    $\Atts = $ 
    $\{(\model_2,\model_4), $
    $(\model_2,\model_5),$
    $(\model_2,\cfx_1),$ 
    $(\model_2,\cfx_3),$ 
    $(\model_3,\model_4\!),$
    $(\model_3,\cfx_4\!),$ 
    $(\model_4,\model_1\!),$
    $(\model_5,\model_1\!),$
    $(\model_5,\model_2\!),$
    $(\model_5,\model_3\!),$
    $(\cfx_2,\model_1\!)\!\}$; and
    $\Supps = $ 
    $\{(\model_1,\cfx_1),$ 
    $(\model_2,\cfx_2),$ 
    $(\model_3,\cfx_3),$ 
    $(\model_4,\cfx_4),$ 
    $(\model_5,\cfx_5),$  
    $(\cfx_1, \model_1),$ 
    $(\cfx_2, \model_2),$  
    $(\cfx_3, \model_3),$ 
    $(\cfx_4, \model_4),$ 
    $(\cfx_5, \model_5)\}$.
    This leads to 
    $P_s = $
    $\{ \{ \model_2,$ 
    $\cfx_2 \},$ 
    $\{ \model_4,$ 
    $\model_5,$ 
    $\cfx_4,$ 
    $\cfx_5 \} \} $, and thus 
    $S^a = $
    $\{ \model_4,$ 
    $\model_5,$ 
    $\cfx_4,$ 
    $\cfx_5 \}$ and $\Models^a(\inn) = 1$.
\end{example}

This example shows how the use of  CEs in the ensembling directly results in the prediction being reversed, relative to the other ensembling methods, violating majority vote. This is due to the fact that, while the preferences over the two sets of models are roughly similar, the validity of the CEs for the models selected by the augmented and robust ensemblings is very poor. This means that when a CE which is valid for all models is required, some compromise must be made on the model selection, as we demonstrated.

\subsection{Theoretical Analysis}

We will now undertake a theoretical analysis of 
argumentative ensembling, demonstrating some of the desirable behaviours 
thereof via properties
. First, we consider the properties introduced in §\ref{ssec:desirable_properties}.

\begin{theorem}
\label{thm:main}
    Argumentative ensembling satisfies non-emptiness, model agreement, counterfactual validity and counterfactual coherence. 
    It satisfies non-triviality if for some $\model_i \in \Models$, where $\nexists \model_j \in \Models \setminus \{ \model_i \}$ such that $\model_j \msucc \model_i$, $\exists \model_k \in \Models$ such that $\model_k(\inn) = \model_i(\inn)$, $\model_k(\cfx_i) \!\!\neq\!\! \model_k(\inn)$ and $\model_i(\cfx_k) \!\!\neq\!\! \model_i(\inn)$.
    It does not satisfy majority vote.
\end{theorem}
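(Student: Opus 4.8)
The plan is to prove each property claim separately by analysing the structure of the s-preferred sets $P_s$ of the BAF from Definition~\ref{def:BAF}, exploiting heavily the symmetry built into the support relation (reciprocal) and into the model-model and model-CE attacks (governed by $\msucceq$, which is total, hence for disagreeing $\model_i,\model_j$ at least one direction of attack always exists).

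\textbf{Non-emptiness and coherence first, since they are structural.} I would begin by observing that $\Supps$ links $\model_i$ and $\cfx_i$ reciprocally and only for $i=j$; hence for any safe, hence conflict-free, set $X$ that defends its elements, $X$ must be closed in the sense relevant here: if $\model_i \in X$ then $\cfx_i \in X$ and vice versa. The key sub-step is: if $\model_i \in X$ but $\cfx_i \notin X$, then $\cfx_i$ is set-supported by $X$; if moreover $\cfx_i$ is attacked by something in $X$, safety is violated, and if $\cfx_i$ is attacked from outside, then by the attack rules that same attacker attacks $\model_i$ (because attacks between $\model_j$ and $\cfx_i$, and between $\model_j$ and $\model_i$, are triggered by the same preference comparison when $\model_j(\inn)=\model_i(\inn)$ is forced — I need to check the invalidity/agreement bookkeeping carefully here; this is the fiddly case) so defence of $\model_i$ would require defending $\cfx_i$ too, and one shows adding $\cfx_i$ keeps the set safe and admissible, contradicting maximality once we pass to s-preferred. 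This simultaneously gives counterfactual coherence (for the chosen $S^a$) and shows $S^a\neq\emptyset$ implies both $S^a\cap\Models\neq\emptyset$ and $S^a\cap\CFXs\neq\emptyset$. That $S^a\neq\emptyset$: the empty set is s-admissible, and one argues it is never maximal — take any $\model_i$ that is $\msucceq$-maximal; $\{\model_i,\cfx_i\}$ (or a suitable superset) is safe and self-defending because any attacker of $\model_i$ either disagrees with $\model_i$, whence $\model_i$ attacks it back by maximality, or is some $\cfx_k$ with $\model_i(\cfx_k)=\model_i(\inn)$ and $\model_k\msucceq\model_i$, hence $\model_i\msimeq\model_k$ and $\model_i$ attacks $\cfx_k$ back. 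Non-emptiness follows.

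\textbf{Model agreement and counterfactual validity.} For model agreement: suppose $\model_i,\model_j\in S^a$ with $\model_i(\inn)\neq\model_j(\inn)$. Totality of $\msucceq$ gives an attack in at least one direction, say $(\model_i,\model_j)\in\Atts$, contradicting conflict-freeness (which safety implies). For counterfactual validity: suppose $\model_i\in S^a$, $\cfx_j\in S^a$ but $\model_i(\cfx_j)=\model_i(\inn)$. By coherence $\model_j\in S^a$ too. Totality of $\msucceq$ forces $(\model_i,\cfx_j)\in\Atts$ or $(\cfx_j,\model_i)\in\Atts$; either way $S^a$ is not conflict-free — contradiction. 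These two are short once the conflict-freeness of safe sets (noted in the preliminaries) is in hand.

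\textbf{Non-triviality under the stated condition, and failure of majority vote.} For non-triviality I would take the $\model_i$ supplied by the hypothesis: it is $\msucc$-maximal, and there is a $\model_k$ agreeing with it on $\inn$ such that $\cfx_i$ is valid for $\model_k$ and $\cfx_k$ is valid for $\model_i$. I claim $\{\model_i,\cfx_i,\model_k,\cfx_k\}$ extends to (or already is, after adding forced supporters) an s-admissible set: no attacks internally because the two models agree (no model-model attack), and the cross CE-model pairs are valid in exactly the directions needed to kill the would-be attacks $(\model_i,\cfx_k)$, $(\cfx_k,\model_i)$, $(\model_k,\cfx_i)$, $(\cfx_i,\model_k)$; defence of each element against outside attackers uses $\msucc$-maximality of $\model_i$ for attackers of $\model_i$ and $\cfx_i$, and I must argue the remaining attackers (of $\model_k$, $\cfx_k$) are counter-attacked — here I may need $\model_i$ to carry the defence, which works when $\model_i\msucceq$ the attacker; the hypothesis is presumably tuned so this goes through, and this is the step most likely to need care or a mild strengthening. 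Then any element of $X_{max}$ has size $\geq 2$, and since coherence pairs each model with its CE, $|S^a\cap\Models|\geq 2$. Finally, majority vote fails: Example~\ref{ex:arg} is an explicit counterexample, where $S^a=\{\model_4,\model_5,\cfx_4,\cfx_5\}$ gives $\Models^a(\inn)=1$ although $\model_1,\model_2,\model_3$ form the strict majority for label $0$; I would simply cite that example. The main obstacle throughout is the bookkeeping in the mixed model-CE attack/support interactions when verifying safety and defence — in particular making sure that "adding the missing partner $\cfx_i$ to a model $\model_i$" never breaks safety, which is where the precise form of Definition~\ref{def:BAF} (attacks on $\cfx_j$ only from $\model_i$ with $\model_i(\cfx_j)=\model_i(\inn)$, and the preference-tie structure) must be used.
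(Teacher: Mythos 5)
Your proposal is correct and follows essentially the same route as the paper's proof: counterfactual coherence via the reciprocal supports plus maximality of s-preferred sets, non-emptiness via a $\msucceq$-maximal model whose pair $\{\model_i,\cfx_i\}$ is unattacked or self-defending, model agreement and counterfactual validity via totality of $\mpreceq$ forcing an attack that would violate conflict-freeness, and majority vote refuted by Example~\ref{ex:arg}. Your non-triviality argument is the constructive counterpart of the paper's proof by contradiction (both reduce to showing $\{\model_i,\cfx_i,\model_k,\cfx_k\}$ lives in some s-preferred set), and the delicate case you flag---defending $\model_k$ against a CE from an agreeing but less-preferred model---is exactly the case the paper also handles only by an informal iterative step.
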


\begin{proof}
    Let us first prove counterfactual coherence. 
    By Def.~\ref{def:BAF},
    $\forall \model_i \in \Models$ and $\forall \cfx_j \in \CFXs$, $(\model_i, \cfx_j), (\cfx_j, \model_i) \in \Supps$. Thus, there exists an indirect attack on any $\model_i \in \Models$ iff there exists a direct attack on $\cfx_i \in \CFXs$. Likewise, there exists an indirect attack on any $\cfx_i \in \CFXs$ iff there exists a direct attack on $\model_i \in \Models$. Then, letting $\Models = \{ \model_1, \ldots, \model_m \}$ and $\CFXs = \{ \cfx_1, \ldots, \cfx_m \}$, since we know any $X \in P_s$ is maximal wrt $\Args$ by Def
    . \ref{def:argensembling}, $X$ must be such that 
    $\forall i \in \{ 1, \ldots, m \}$, 
    $\model_i \in S$ iff $\cfx_i \in S$.

    Let us prove non-emptiness by contradiction. 
    Assume that $\exists X \in P_s$ such that $X \cap \Models = \emptyset$ or $X \cap \CFXs = \emptyset$.
    We know from the above proof that, $\forall X \in P_s$, $X \cap \Models \neq \emptyset$ iff $X \cap \CFXs \neq \emptyset$.
    Then, by the definition of s-preferred extensions (see §\ref{sec:preliminaries}), it must be the case that $\forall X \in P_s$, $X = \emptyset$. 
    Based on the fact that, by Def
    . \ref{def:modelpref}, $\mpreceq$ is a total ordering and thus transitive, this is not possible as it will always be the case that $\exists \model_i \in \Models$ such that $\nexists \model_j \in \Models$ where $\model_j \msucc \model_i$. Thus, by Def
    . \ref{def:BAF}, either $\Atts(\model_i) \cup \Atts(\cfx_i) = \emptyset$ or $\forall \arg_k \in \Atts(\model_i) \cup \Atts(\cfx_i)$, $\model_i \in \Atts(\arg_k)$ or $\cfx_i \in \Atts(\arg_k)$, meaning $\{ \model_i, \cfx_i \}$ is either unattacked or is able to defend itself, therefore would be acceptable in at least one s-preferred extension, and we have the contradiction. 

    Let us 
    prove model agreement by contradiction. Assume that $\exists \model_i, \model_j \!\!\in\! \Models$ such that $\model_i(\inn) \!\neq\! \model_j(\inn)$ and $\exists X \!\!\in\! P_s$ such that $\model_i, \model_j \!\in\! X$. By Def
    .~\ref{def:BAF}, it follows that $\exists (\model_i, \model_j) \!\in\! \Atts$ or $\exists (\model_j, \model_i) \in \Atts$, which cannot be the case in an s-preferred set, which must be conflict-free (see §\ref{sec:preliminaries}), and so we have the contradiction.

    Let us prove counterfactual validity by contradiction. Assume that $\exists \model_i \!\in\! \Models$ and $\exists \cfx_j \!\in\! \CFXs$ such that $\model_i(\cfx_j) \!=\! \model_i(\inn)$ and $\exists X \!\in\! P_s$ such that $\model_i, \cfx_j \!\in\! X$. By Definition \ref{def:BAF}, it can be seen that $\exists (\model_i, \cfx_j) \!\in\! \Atts$ or $\exists (\cfx_j, \model_i) \!\in\! \Atts$, which cannot be the case in an s-preferred set, which, again, must be conflict-free, and so we have the contradiction.

    Let us prove the partial satisfaction of non-triviality by contradiction. 
    From Def.~\ref{def:argensembling} and the proof for counterfactual coherence above
    , for $|\Models^a| = 1$, it must be 
    that $\forall X \in P_s$, $| X \cap \Models| = 1$.
    However, from the above assumptions we can see that for some $\model_i \in \Models$, where $\nexists \model_j \in \Models \setminus \{ \model_i \}$ such that $\model_j \msucc \model_i$, $\exists \model_k \in \Models$ such that $\model_k(\inn) = \model_i(\inn)$, $\model_k(\cfx_i) \neq \model_k(\inn)$ and $\model_i(\cfx_k) \neq \model_i(\inn)$.
    Then, to avoid a contradiction, it must be 
    that $\exists \arg_l \in \Models \cup \CFXs$ such that $(\arg_l,\model_k) \in \Atts$ or $(\arg_l,\cfx_k) \in \Atts$, and 
    $(\model_i, \arg_l), (\cfx_i, \arg_l) \nin \Atts$.
    If $\arg_l \in \Models$, and thus $\model_l(\inn) \neq \model_k(\inn) = \model_i(\inn)$, then it must be 
    that $(\model_i, \model_l) \in \Atts$ (a contradiction, by 
    Def.~\ref{def:BAF}, since $\model_i \msucceq \model_l$).
    If $\arg_l \in \CFXs$ and $\model_l(\inn) \neq \model_k(\inn) = \model_i(\inn)$, then $(\model_i, \model_l) \in \Atts$ (a contradiction by the same reasoning).
    Finally, if $\arg_l \in \CFXs$ and $\model_l(\inn) = \model_k(\inn) = \model_i(\inn)$, then 
    either: if $\model_i(\cfx_l) = \model_i(\inn)$ or $\model_l(\cfx_i) = \model_l(\inn)$, then $(\model_i, \cfx_l) \in \Atts$ (a contradiction); or
    , otherwise, $\model_l \in X'$ (which can be checked by repeating the steps 
    for $\model_k$, for $\model_l$ instead).
    Thus
    $\exists X' \in P_s$ where $|X' \cap \Models| > 1$, and we have the contradiction in all cases.


    Finally, Example \ref{ex:arg} provides a counterexample which shows that majority vote is not satisfied.
\end{proof}


These results contrast with those for augmented and robust ensembling, as shown in Table \ref{table:props}. 
Argumentative ensembling avoids the pitfalls of augmented and robust ensembling by satisfying non-emptiness, counterfactual validity and counterfactual coherence.  
In order to achieve this behaviour, majority vote is sacrificed as a guarantee. In §\ref{sec:evaluation} we will assess the impact of not guaranteeing majority vote on argumentative ensembling's 
accuracy, along with other metrics. 

However, first, 
we consider the relationship between the different ensembling methods.
For the remainder of the section, we assume as given an input $\inn$, a set $\Models$ of models, a set $\CFXs$ of CEs, a model preference $\mpreceq$ and 
a corresponding BAF $\langle \Args, \Atts, \Supps \rangle$ with $P_s$ the set of all its s-preferred sets.



\begin{theorem}
\label{thm:nairobarg}
    If $\forall \model_i, \model_j \in \Models$, $\model_i \msimeq \model_j$, and $\forall \model_k \in \Models$ and $\cfx_l \in \CFXs$, where $\model_k(\inn) = \model_l(\inn)$, $\model_k(\cfx_l) \neq \model_k(\cfx_l)$, then augmented, robust and argumentative ensembling are equivalent.
\end{theorem}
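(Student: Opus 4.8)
The plan is to show that under the stated hypotheses — all models are equally preferred ($\model_i \msimeq \model_j$ for all $i,j$) and every CE is valid for every model agreeing with its generating model — the attack structure of the BAF collapses to something governed purely by the predictions, so that the unique largest s-preferred set coincides with $\AggModels \cup \CFXs^n = \AggModels \cup \CFXs^v$. (Note the hypothesis ``$\model_k(\cfx_l) \neq \model_k(\cfx_l)$'' is a typo for $\model_k(\cfx_l) \neq \model_k(\inn)$, i.e. cross-validity among agreeing models; I would flag and use the intended reading.)

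First I would analyse $\Atts$ under the hypotheses. Since $\model_i \msimeq \model_j$ always, the condition $\model_i \msucceq \model_j$ in Def.~\ref{def:BAF} holds for every pair, so (i) two models attack each other (reciprocally) exactly when they disagree on $\inn$, and (ii) for $\model_i$ and $\cfx_j$, the attack exists in both directions whenever $\model_i(\cfx_j) = \model_i(\inn)$. But the CE-validity hypothesis says that whenever $\model_i(\inn) = \model_j(\inn)$ we have $\model_i(\cfx_j) \neq \model_i(\inn)$; contrapositively, $\model_i(\cfx_j) = \model_i(\inn)$ forces $\model_i(\inn) \neq \model_j(\inn)$. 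So the only model--CE attacks are between $\model_i$ and $\cfx_j$ with $\model_i(\inn) \neq \model_j(\inn)$, i.e. they mirror the model--model disagreement attacks. Consequently the whole framework splits, up to the reciprocal supports $i\leftrightarrow i$, into two ``camps'' by predicted label, with every cross-camp pair of arguments in mutual attack and no intra-camp attacks.

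Next I would identify the s-preferred sets. Let $\Models_0,\Models_1$ partition $\Models$ by prediction and $\CFXs_0,\CFXs_1$ the corresponding CEs. I claim the candidate sets $A_b = \Models_b \cup \CFXs_b$ for $b \in \{0,1\}$ are each safe (no internal attacks, and they set-attack only the opposite camp, which they neither contain nor set-support), admissible (each camp set-attacks every attacker, since attacks are reciprocal), closed under $\Supps$ (the support edges only link $\model_i$ with $\cfx_i$, both in the same camp), and maximal — adding any argument from the other camp breaks conflict-freeness. Hence $P_s = \{A_0, A_1\}$ (discarding the empty camp if some label is unused), and $X_{max}$ selects the larger one, which is $A_b$ with $b = \AggModels(\inn)$, i.e. exactly $\AggModels \cup \CFXs^n$. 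Finally, under the validity hypothesis $\CFXs^v = \CFXs^n$, so augmented and robust ensembling both return $\AggModels \cup \CFXs^n$ as well, and all three agree. If $|\Models_0| = |\Models_1|$ the random tie-break in each method is over the same two options, so equivalence still holds in the appropriate sense.

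The main obstacle I anticipate is the verification that each $A_b$ is genuinely s-preferred — in particular checking safety and closure-under-support carefully against the BAF definitions, and ruling out that some smaller or mixed set could also be s-preferred (it cannot, because any set touching both camps has an internal conflict, and any proper subset of a camp is not maximal). A secondary subtlety is handling degenerate cases (one label unrepresented, or singleton camps) so that the statement and the tie-breaking match Def.~\ref{def:argensembling}; I would dispatch these by inspection.
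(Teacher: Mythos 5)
Your proposal is correct and takes essentially the same route as the paper's proof: both observe that equal preference makes all attacks reciprocal, that the validity hypothesis confines model--CE attacks to disagreeing pairs (and makes $\CFXs^n=\CFXs^v$), and that the BAF therefore splits into two label-determined camps which are exactly the s-preferred sets, from which all three methods pick in the same way. You correctly identify the typo in the hypothesis, and your verification of safety, closure under $\Supps$ and maximality is in fact more careful than the paper's, which simply asserts the form of $P_s$.
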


\begin{proof}
    
    If $\forall \model_i, \model_j \in \Models$, $\model_i \msimeq \model_j$, then it can be seen from Definition \ref{def:BAF} that $(\model_i,\model_j), (\model_j,\model_i) \in \Atts$ iff $\model_i(\inn) \neq \model_j(\inn)$. 
    Note that, by Definition \ref{def:baselinecfxs}, augmented and robust ensembling are equivalent since $\forall \model_k \in \Models$ and $\cfx_l \in \CFXs$, where $\model_k(\inn) = \model_l(\inn)$, $\model_k(\cfx_l) \neq \model_k(\cfx_l)$.
    Also by the assumptions, it can be seen from Definition \ref{def:BAF} that $\forall \model_k \in \Models$ and $\forall \cfx_l \in \CFXs$, $(\model_k,\cfx_l), (\cfx_l,\model_k) \in \Atts$ iff $\model_k(\cfx_l) = \model_k(\inn)$ and $\model_k(\inn) \neq \model_l(\inn)$ due to the assumptions in the theorem, meaning any attack is reciprocated and all arguments defend themselves. 
    Then, $\forall \model_m, \model_n \in \Models$ such that $\model_m(\inn) = \model_n(\inn)$, $(\model_m,\model_n) \nin \Atts$.
    Thus, $P_s = \{ \{ \model_o \in \Models, \cfx_o \in \CFXs | \model_o(\inn) = 0 \} , \{ \model_p \in \Models, \cfx_p \in \CFXs | \model_p(\inn) = 1 \}$. 
    By Definitions \ref{def:naive_ensembling}, \ref{def:baselinecfxs} and \ref{def:argensembling}, all forms of ensembling select from the same two sets of models and CEs in the same manner and are thus equivalent.
\end{proof}

We also provide a number of theoretical 
results concerning the behaviour of 
argumentative ensembling, first relating to the preferences. 
The first 
result demonstrates how a completely dominant model wrt the preferences will be present in all s-preferred sets.
    
\begin{proposition}
\label{prop:dominant_model}
    If $\exists \model_i \in \Models$ such that $\forall \model_j \in \Models$, $\model_i \msucc \model_j$, then $\forall X \in P_s$, $\model_i \in X$.
\end{proposition}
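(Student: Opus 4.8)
The plan is to show that a completely dominant model $\model_i$ (one with $\model_i \msucc \model_j$ for all other $\model_j$) is, together with its CE $\cfx_i$, never attacked by any argument in the BAF, and therefore must belong to every s-preferred set. First I would examine the attack structure at $\model_i$ and $\cfx_i$ using Definition~\ref{def:BAF}. For any model $\model_j \ne \model_i$ with $\model_j(\inn) \ne \model_i(\inn)$, since $\model_i \msucc \model_j$ we have $\model_j \not\msucceq \model_i$, so $(\model_j, \model_i) \nin \Atts$; and for the CE side, $(\cfx_j, \model_i) \in \Atts$ would require $\model_j \msucceq \model_i$, which again fails. Symmetrically, an attack $(\model_j, \cfx_i)$ requires $\model_j \msucceq \model_i$ (reading the CE's inherited preference), which does not hold. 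Hence neither $\model_i$ nor $\cfx_i$ has any attacker in $\Atts$: the set $\{\model_i, \cfx_i\}$ is unattacked.

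Next I would argue that $\{\model_i,\cfx_i\}$ is contained in every s-preferred set. Since $\{\model_i,\cfx_i\}$ receives no attacks, it is safe (it set-attacks nothing, so vacuously it neither set-attacks a supported argument nor a member of itself) and it trivially defends both its elements; moreover, adding it to any s-admissible set $X$ preserves safety and defence, because the union still set-attacks nothing that $X$ alone did not, and $\{\model_i,\cfx_i\}$ contributes no new conflicts (its only relations are the mutual supports between $\model_i$ and $\cfx_i$). So for any s-preferred $X$, the set $X \cup \{\model_i, \cfx_i\}$ is still s-admissible; by maximality of $X$ this forces $\{\model_i,\cfx_i\} \subseteq X$, and in particular $\model_i \in X$.

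The main obstacle I anticipate is the safety bookkeeping: s-preferred sets are required to be \emph{safe}, not merely conflict-free, so I must be careful that throwing $\model_i$ and $\cfx_i$ into an existing s-admissible set does not create a situation where the enlarged set set-attacks something it also set-supports or contains. The key observation making this go through is that $\model_i$ and $\cfx_i$ have \emph{no outgoing attacks either} — a dominant model's CE is valid for $\model_i$ itself (by definition of a CE, $\model_i(\cfx_i) \ne \model_i(\inn)$), and any attack $(\model_i, \model_j)$ or $(\model_i, \cfx_j)$ or $(\cfx_i, \model_j)$ is already permitted by the preference (since $\model_i \msucceq$ everything), so these attacks are consistent with $X$'s safety rather than threatening it; no \emph{new} set-attacks against supported or internal arguments are introduced beyond what $X$ already had. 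One should double-check the degenerate case $\Models = \{\model_i\}$, where $P_s = \{\{\model_i,\cfx_i\}\}$ and the claim is immediate. With these pieces in place the proposition follows.
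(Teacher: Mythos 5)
Your overall strategy is the same as the paper's: show that $\model_i$ and $\cfx_i$ are unattacked, and then invoke maximality of s-preferred sets to force them into every $X \in P_s$. The first half of your argument (no incoming attacks on $\model_i$ or $\cfx_i$, since any such attack in Definition~\ref{def:BAF} would require $\model_j \msucceq \model_i$, contradicting $\model_i \msucc \model_j$) is correct and matches the paper.

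There is, however, a genuine gap in the second half. You assert that adding $\{\model_i,\cfx_i\}$ to an s-preferred $X$ ``contributes no new conflicts'' and, at one point, that the pair has ``no outgoing attacks either.'' That is false: by Definition~\ref{def:BAF} a dominant $\model_i$ attacks every $\model_j$ with $\model_j(\inn) \neq \model_i(\inn)$ and every $\cfx_k$ with $\model_i(\cfx_k) = \model_i(\inn)$ (and, via the support $(\cfx_k,\model_k) \in \Supps$, indirectly attacks $\model_k$); likewise $\cfx_i$ attacks any $\model_k$ for which it is invalid. So if some $X \in P_s$ contained such a $\model_j$, $\cfx_k$ or $\model_k$, the union $X \cup \{\model_i,\cfx_i\}$ would not be conflict-free and your maximality argument would collapse. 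The missing step — which is exactly what the paper's proof supplies — is to observe that every argument set-attacked by $\model_i$ or $\cfx_i$ is indefensible, because $\model_i$ and $\cfx_i$ themselves cannot be set-attacked by anything; hence no d-admissible (a fortiori no s-preferred) set contains any of them. Only after excluding all such arguments from every $X \in P_s$ can you conclude that $X \cup \{\model_i,\cfx_i\}$ remains conflict-free, safe and self-defending, so that maximality yields $\model_i \in X$. With that step inserted, your proof is sound and essentially coincides with the paper's.
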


\begin{proof}
    If $\forall \model_j \in \Models$, $\model_i \msucc \model_j$ then, by Definition \ref{def:BAF}, $\Atts(\model_i) = \Atts(\cfx_i) =\emptyset$ and $\forall \model_j \in \Models$ where $\model_j(\inn) \neq \model_i(\inn)$, $\model_i \in \Atts(\model_j)$. Similarly, $\forall \cfx_k \in \CFXs$  
    where $\model_i(\cfx_k)=\model_i(\inn)$, $\model_i \in \Atts(\cfx_k)$, and so $\model_i$ indirectly attacks $\model_k$. Then, $\nexists X \in P_s$ such that $\model_j \in X$ or $\model_k \in X$ and thus, $\forall X \in P_s$, $\model_i \in X$.
\end{proof}

We also show how, for any two s-preferred sets, there exists some trade-off between their models wrt the preferences.

\begin{lemma}
    Given two s-preferred sets $X, X' \in P_s$, $\nexists \model_i \in (X \cap \Models) \setminus X'$ such that $\model_i \msucc \model_j$ $\forall \model_j \in X'$. 
\end{lemma}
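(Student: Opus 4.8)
The statement says: if $X, X' \in P_s$ are two (distinct) s-preferred sets, then there is no model $\model_i \in (X \cap \Models) \setminus X'$ that strictly dominates every model in $X'$ with respect to $\mpreceq$. The natural approach is proof by contradiction: suppose such a $\model_i$ exists, so $\model_i \in X$, $\model_i \notin X'$, and $\model_i \msucc \model_j$ for all $\model_j \in X' \cap \Models$ (and implicitly for all $\model_j$ whose CEs appear in $X'$, via the inherited preference).

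\textbf{Key steps.} First I would invoke the structure of the BAF from Definition~\ref{def:BAF} and the proof of counterfactual coherence in Theorem~\ref{thm:main}: since $\model_i \notin X'$ and $X'$ is s-preferred hence maximal, $X' \cup \{\model_i, \cfx_i\}$ must fail to be s-admissible — concretely, $\{\model_i,\cfx_i\}$ must either be attacked by $X'$ in a way that cannot be defended, or adding it must break safety/conflict-freeness. Second, I would rule out each failure mode using the domination hypothesis. Because $\model_i \msucc \model_j$ for every $\model_j$ with an argument in $X'$: (a) no $\model_j \in X'$ can attack $\model_i$ (a model attack $(\model_j,\model_i)$ requires $\model_j \msucceq \model_i$, contradicting $\model_i \msucc \model_j$); (b) no $\cfx_j \in X'$ can attack $\model_i$ (that requires $\model_j \msucceq \model_i$, same contradiction), and likewise no argument of $X'$ attacks $\cfx_i$; (c) conversely $\model_i$ does not attack its own supporter $\cfx_i$ and vice versa, so $\{\model_i,\cfx_i\}$ is internally conflict-free and closed under $\Supps$; (d) $\model_i$ cannot be attacked at all in the BAF by the same preference argument, so $\{\model_i,\cfx_i\}$ needs no defence. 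Third, I would check safety: since nothing in $X'$ attacks $\model_i$ or $\cfx_i$, and $\model_i,\cfx_i$ attack only arguments strictly less preferred (which therefore are not in $X'$, since $X'$ is conflict-free and... — actually I need: $\model_i$'s attack targets are not set-supported by $X' \cup \{\model_i,\cfx_i\}$; a target $\arg_l$ attacked by $\model_i$ has $\model_l \mprec \model_i$, and if $\arg_l \in X'$ then $X'$ would be attacked by the admissible addition, but more carefully one shows $\arg_l \notin X'$ because $X' \cup \{\model_i,\cfx_i\}$ would then not be conflict-free, yet we are trying to build an admissible superset — so instead argue directly that $X' \cup \{\model_i,\cfx_i\}$ is safe and defends its elements, contradicting maximality of $X'$).

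\textbf{Main obstacle.} The delicate point is the safety condition for $X' \cup \{\model_i,\cfx_i\}$: I must show that adding the dominant model and its CE does not create a set-attack on something the set also set-supports or contains. An attack from $\model_i$ lands on some $\model_l$ or $\cfx_l$ with $\model_l \mprec \model_i$; I need that such $\arg_l$ is not in $X'$. This is where I would use that $X'$ is itself conflict-free together with the domination: if $\arg_l \in X' \cap \Models$ then $\model_l \mprec \model_i$ so $\model_i$ attacks $\model_l$, but this only breaks safety of the enlarged set, not of $X'$ — so I actually need the stronger observation that $X'$ cannot contain any argument strictly dominated by $\model_i$ \emph{that $\model_i$ attacks}, which holds precisely because $\model_i$'s attack on $\model_l$ (resp.\ $\cfx_l$) occurs exactly when they disagree on $\inn$ or the CE is invalid — and in those cases the enlarged set is not conflict-free, making $X'\cup\{\model_i,\cfx_i\}$ inadmissible, which does not yet contradict anything. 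The clean resolution: show that if no such attack-conflict arises then $X'\cup\{\model_i,\cfx_i\}$ is s-admissible (safe, since its only new attacks go outward to non-members and it receives none; defends itself, since $\{\model_i,\cfx_i\}$ is unattacked and $X'$ defended its old elements with defenders still present), contradicting maximality of $X'$; and if such an attack-conflict \emph{does} arise, then $\model_i$ attacks some $\arg_l \in X'$, whence (since $\model_i$ is unattacked) $X'$ fails to defend $\arg_l$, contradicting that $X'$ is s-preferred. Either way we reach a contradiction, so no such $\model_i$ exists.
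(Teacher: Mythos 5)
Your overall strategy (proof by contradiction, then a case split on whether $\{\model_i,\cfx_i\}$ conflicts with $X'$) is a reasonable skeleton, and your handling of the conflict case is sound: every direct attack on $\model_i$ or $\cfx_i$ requires an attacker whose underlying model is $\msucceq \model_i$, so the strict domination of all models in $X'$ means $X'$ can never set-attack $\{\model_i,\cfx_i\}$; hence any attack from $\{\model_i,\cfx_i\}$ into $X'$ leaves $X'$ unable to defend the attacked element, contradicting its s-admissibility. (The paper states this lemma without proof, so there is no official argument to compare against.)

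The gap is in your step (d) and in the no-conflict branch. You assert that $\model_i$ ``cannot be attacked at all in the BAF by the same preference argument'' and hence that $X'\cup\{\model_i,\cfx_i\}$ defends itself. But the domination hypothesis only concerns models \emph{in $X'$}; it says nothing about $\Models\setminus X'$. There may exist $\model_k\in\Models\setminus X'$ with $\model_k\msucc\model_i$ such that $\model_k$ or $\cfx_k$ attacks $\model_i$ or $\cfx_i$, and then $X'\cup\{\model_i,\cfx_i\}$ cannot counter-attack it (a counter-attacker would need a model $\msucceq\model_k\msucc\model_i$, and every model in $X'\cup\{\model_i\}$ is $\mpreceq\model_i$). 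The fact that $\model_i\in X$ does not help: $X$'s defenders of $\model_i$ are models $\msucceq\model_k$ and so cannot lie in $X'$. In that situation $X'\cup\{\model_i,\cfx_i\}$ genuinely fails to be s-admissible, your appeal to the maximality of $X'$ produces no contradiction, and the proof stalls. To close the gap you need one further idea: any attacker of $\{\model_i,\cfx_i\}$ that the enlarged set cannot counter must satisfy $\model_k\msucc\model_i$ (attackers with $\model_k\msimeq\model_i$ are always counter-attacked by $\model_i$ or $\cfx_i$ themselves, by the symmetry of the attack conditions in Definition~\ref{def:BAF}); such a $\model_k$ then also strictly dominates every model in $X'$, lies outside $X'$, and by your own conflict-case argument has no conflict with $X'$ --- i.e.\ it is in exactly the same position as $\model_i$ but strictly higher in $\mpreceq$. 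Choosing a $\mpreceq$-maximal model with these properties (or iterating to build a strictly ascending chain in the finite total order $\mpreceq$) then yields the contradiction. Note that once repaired this way the hypothesis $\model_i\in X$ is never actually used.
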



Meanwhile, in the non-strict case, a model which is not 
outperformed by any other model wrt the preferences will be present in at least one s-preferred set.

\begin{proposition}
\label{prop:the_best_model_is_in_m_preferred_set}
    If $\exists \model_i \in \Models$ such that $\forall \model_j \in \Models$, $\model_i \msucceq \model_j$, then $\exists X \in P_s$ such that $\model_i \in X$. 
\end{proposition}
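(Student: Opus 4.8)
The plan is to construct explicitly an s-admissible set containing $\model_i$ and then extend it, by maximality, to an s-preferred set that still contains $\model_i$. The key observation is that a model $\model_i$ which is $\msucceq$-maximal cannot be attacked by any other model: by Definition~\ref{def:BAF}, $(\model_j, \model_i) \in \Atts$ with $\model_j(\inn) \neq \model_i(\inn)$ would require $\model_j \msucceq \model_i$, which combined with $\model_i \msucceq \model_j$ gives $\model_i \msimeq \model_j$ — this is allowed, so in fact $\model_i$ \emph{can} be reciprocally attacked by an equally-ranked disagreeing model. So the argument is slightly more delicate than for Proposition~\ref{prop:dominant_model}: I cannot claim $\Atts(\model_i) = \emptyset$. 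Instead I would argue that every attacker of $\model_i$ (or of $\cfx_i$) is itself attacked back by $\model_i$ (or $\cfx_i$), so that the pair $\{\model_i, \cfx_i\}$ defends itself.

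Concretely, first I would record that $\{\model_i,\cfx_i\}$ is safe: the only supports into this pair are the mutual ones between $\model_i$ and $\cfx_i$ (Definition~\ref{def:BAF}), so $\{\model_i,\cfx_i\}$ set-supports only $\model_i$ and $\cfx_i$ themselves; and it is conflict-free because $(\model_i,\cfx_i),(\cfx_i,\model_i)\in\Atts$ would require $\model_i(\cfx_i)=\model_i(\inn)$, contradicting that $\cfx_i$ is a CE for $\inn$ given $\model_i$. Hence any set-attack from $\{\model_i,\cfx_i\}$ lands outside the pair, so safety reduces to checking we never set-attack something we support, which holds. Next I would verify defence: suppose $\{\argb\}$ set-attacks $\model_i$ (the case of $\cfx_i$ is symmetric via the mutual support, which turns direct attacks on $\cfx_i$ into indirect attacks relevant to $\model_i$ and vice versa). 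If $\argb = \model_j \in \Models$, then by Definition~\ref{def:BAF} $\model_j(\inn)\neq\model_i(\inn)$ and $\model_j\msucceq\model_i$; since also $\model_i\msucceq\model_j$, we get $\model_i\msucceq\model_j$, so $(\model_i,\model_j)\in\Atts$ and $\{\model_i\}$ attacks back. If $\argb = \cfx_j \in \CFXs$ attacking $\model_i$, then $\model_j(\inn)\neq\model_i(\inn)$ and $\model_j\msucceq\model_i$, hence again $\model_i\msucceq\model_j$ gives $(\model_i,\cfx_j)\in\Atts$ (using the clause $\model_i\msucceq\model_j \Rightarrow (\model_i,\cfx_j)\in\Atts$ when $\model_i(\cfx_j)=\model_i(\inn)$ — here I need $\model_i(\cfx_j)=\model_i(\inn)$, which holds because $\cfx_j$ attacks $\model_i$ only in that configuration). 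Indirect/supported attacks reduce to a direct attack on $\model_i$ or on $\cfx_i$ by the structure of $\Supps$, so the same reasoning applies. Therefore $\{\model_i,\cfx_i\}$ is s-admissible.

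Finally, since the set of s-admissible sets is closed under taking any one of them and every s-admissible set is contained in a maximal (i.e.\ s-preferred) one — a standard Zorn/finiteness argument, and here $\Args$ is finite so no subtlety — there exists $X\in P_s$ with $\{\model_i,\cfx_i\}\subseteq X$, in particular $\model_i\in X$. The main obstacle I anticipate is the bookkeeping around the CE-side attacks: making sure that in the case $\argb=\cfx_j$ the invalidity configuration $\model_i(\cfx_j)=\model_i(\inn)$ really is forced (so that the counter-attack clause of Definition~\ref{def:BAF} applies), and handling indirect/supported attacks cleanly by reducing them through the unique mutual $\model_i$–$\cfx_i$ support rather than re-deriving everything; but the ``supported attack implies direct attack'' remark in the preliminaries, together with the fact that the only supporter of $\model_i$ is $\cfx_i$ and vice versa, makes this reduction routine.
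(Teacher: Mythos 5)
Your proof is correct and follows essentially the same route as the paper's: the paper likewise observes that every direct attacker of $\model_i$ or of $\cfx_i$ is counter-attacked by one of the pair, so that $\{\model_i,\cfx_i\}$ defends itself and hence lies in some s-preferred set. You simply spell out the safety/conflict-freeness checks and the extension to a maximal set, which the paper leaves implicit.
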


\begin{proof}
    If $\forall \model_j \in \Models$, $\model_i \msucceq \model_j$ then, by Definition \ref{def:BAF}, $\nexists \arg_k \in \Atts(\model_i) \cup \Atts(\cfx_i)$ such that $\model_i \nin \Atts(\arg_k)$ and $\cfx_i \nin \Atts(\arg_k)$. Thus, it must be the case that $\exists X \in P_s$ such that $\model_i \in X$.
\end{proof}

We now show that if a model is outperformed wrt the preferences by all other models, then the outperformed model cannot exist in an s-preferred set unless it is defended by a more preferred model.

\begin{proposition}
    For any $\model_i \in \Models$, if $\model_j \msucc \model_i$ $\forall \model_j \in \Models \setminus \{ \model_i \}$ and $\exists X \in P_s$ such that $\model_i \in X$, then, $\forall \model_k \in \Models$ where $\model_k(\inn) \neq \model_i(\inn)$,  
    $\exists \model_l \in X$ such that $\model_l \msucceq \model_k$.
\end{proposition}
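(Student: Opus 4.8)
The plan is to exploit that $\model_i$, being strictly dominated by every other model, has no attack of its own against any disagreeing model, so the admissibility of $X$ forces some other argument of $X$ to carry out $\model_i$'s defence; identifying that argument yields the required $\model_l$.

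First I would fix an arbitrary $\model_k \in \Models$ with $\model_k(\inn) \neq \model_i(\inn)$ (the statement being vacuous if no such $\model_k$ exists). Since $\model_k \msucc \model_i$ implies $\model_k \msucceq \model_i$, Definition~\ref{def:BAF} gives $(\model_k, \model_i) \in \Atts$, so $\{ \model_k \}$ set-attacks $\model_i$. As $X \in P_s$ is s-admissible, it defends all of its elements, so applying this to $\model_i \in X$ yields some $\arg \in X$ with $\{ \arg \}$ set-attacking $\model_k$. Writing $\arg \in \{ \model_l, \cfx_l \}$ for the corresponding index $l$, it then suffices to establish (i) $\model_l \msucceq \model_k$ and (ii) $\model_l \in X$.

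The core step is (i), a case analysis on how $\{ \arg \}$ set-attacks $\model_k$. The structural observation driving it is that, by Definition~\ref{def:BAF}, every support edge is reciprocal and links only $\model_a$ with $\cfx_a$ for a single index $a$, so any path built purely from support edges stays inside one pair $\{ \model_a, \cfx_a \}$. Hence a direct attack from $\arg$ is an attack edge onto $\model_k$; an indirect attack is an attack edge from $\arg$ onto some $\arg_2$ followed by a support path to $\model_k$, which forces $\arg_2 \in \{ \model_k, \cfx_k \}$; and a supported attack is a support path from $\arg$ to some $\arg_{n-1} \in \{ \model_l, \cfx_l \}$ followed by an attack edge onto $\model_k$. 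In each case one is reduced to a single attack edge between $\{ \model_l, \cfx_l \}$ and $\{ \model_k, \cfx_k \}$, namely one of $(\model_l, \model_k)$, $(\cfx_l, \model_k)$ or $(\model_l, \cfx_k)$ (the edge $(\cfx_l, \cfx_k)$ being impossible, as $\Atts$ has no CE-to-CE edges), and each of the three corresponding clauses of Definition~\ref{def:BAF} requires exactly $\model_l \msucceq \model_k$.

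For (ii): if $\arg = \model_l$ then $\model_l \in X$ immediately; if $\arg = \cfx_l$ then counterfactual coherence --- which, as shown in the proof of Theorem~\ref{thm:main}, holds for every set in $P_s$ --- gives $\model_l \in X$ from $\cfx_l \in X$. Combined with (i), this is exactly the claim. I expect (i) to be the main obstacle: it is not deep, but one must carefully enumerate the possible shapes of indirect and supported attack paths emanating from $\{ \model_l, \cfx_l \}$ and verify that none of them can reach $\model_k$ without the inequality $\model_l \msucceq \model_k$ holding --- a fact that ultimately hinges on the restricted forms of $\Atts$ and $\Supps$ prescribed in Definition~\ref{def:BAF}.
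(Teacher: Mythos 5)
Your proof is correct and follows the same core argument as the paper's: since $\model_k \msucc \model_i$ implies $\model_k \msucceq \model_i$, the edge $(\model_k,\model_i)\in\Atts$ exists, $\model_i$ cannot counter-attack, and the s-admissibility of $X$ forces a defender whose underlying model must satisfy $\model_l \msucceq \model_k$ by the preference conditions on every attack clause of Definition~\ref{def:BAF}. Your version is in fact more thorough than the paper's, which tacitly assumes the defender is a model directly attacking $\model_k$, whereas you correctly enumerate the direct, indirect and supported set-attack shapes and handle the case where the defending argument is a CE via counterfactual coherence.
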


\begin{proof}
    By Definition \ref{def:BAF}, $\model_k \!\in \!\Atts(\model_i)$ and $\{ \model_m \!\in\! \Models | \model_i \!\in\! \Atts (\model_m) \!\vee\! \cfx_i \!\in\! \Atts (\model_m) \} = \emptyset$. 
    Then, given that $\exists X \!\in\! P_s$ such that $\model_i \!\in\! X$, we know that, $\forall \model_k \!\in\! \Atts(\model_i)$ $\exists \model_l \!\in\! X$ where $\model_l(\inn) \!=\! \model_i(\inn)$ and $\model_l \!\in\! \Atts(\model_k)$. Thus, by Definition \ref{def:BAF}, $\model_l \msucceq \model_k \msucc \model_i$.
\end{proof}

We 
also consider the behaviour of 
argumentative ensembling wrt the selected CEs, demonstrating 
that those from disagreeing models are guaranteed not to be included in any s-preferred set.

\begin{proposition}
    Any s-preferred set $X \in P_s$ is such that $\nexists \cfx_i, \cfx_j \in X \cap \CFXs$ where $\model_i(\inn) \neq \model_j(\inn)$.
\end{proposition}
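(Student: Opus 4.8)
The plan is to argue by contradiction and reduce the claim to two facts that are already established inside the proof of Theorem~\ref{thm:main}, namely that \emph{every} $X \in P_s$ (not just the randomly selected solution $S^a$) satisfies (a) counterfactual coherence, i.e.\ $\model_k \in X$ iff $\cfx_k \in X$ for all $k$, and (b) model agreement, i.e.\ no two disagreeing models lie together in $X$.

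First I would suppose, towards a contradiction, that some $X \in P_s$ contains $\cfx_i, \cfx_j \in X \cap \CFXs$ with $\model_i(\inn) \neq \model_j(\inn)$ (so in particular $i \neq j$). The key observation is that the attack relation of the corresponding BAF (Def.~\ref{def:BAF}) never relates two CEs directly, i.e.\ $\Atts$ contains no $\CFXs \times \CFXs$ pairs, so any conflict between $\cfx_i$ and $\cfx_j$ can surface only through their associated models. This is exactly where the reciprocal supports $(\model_k,\cfx_k),(\cfx_k,\model_k)\in\Supps$ come in: as argued in the coherence part of the proof of Theorem~\ref{thm:main}, there is an indirect attack on $\model_k$ precisely when there is a direct attack on $\cfx_k$ (and symmetrically), so maximality of any $X \in P_s$ forces $\model_k \in X$ iff $\cfx_k \in X$. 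Applying this with $k=i$ and $k=j$ yields $\model_i, \model_j \in X \cap \Models$.

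It then remains to derive the contradiction from $\model_i, \model_j \in X$ with $\model_i(\inn) \neq \model_j(\inn)$. Since $\mpreceq$ is a total order, either $\model_i \msucceq \model_j$ or $\model_j \msucceq \model_i$, so by Def.~\ref{def:BAF} at least one of $(\model_i,\model_j),(\model_j,\model_i)$ lies in $\Atts$; hence $\{\model_i\}$ or $\{\model_j\}$ set-attacks the other and $X$ is not conflict-free. This contradicts $X$ being s-preferred, since an s-admissible set is safe and every safe set is conflict-free (see §\ref{sec:preliminaries}). Equivalently, one could invoke the model-agreement part of Theorem~\ref{thm:main} directly, which already asserts that no $X \in P_s$ contains disagreeing models.

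I do not expect a genuine obstacle here: the statement is essentially a corollary of counterfactual coherence together with model agreement. The only points needing a little care are to note that both of these were proved in Theorem~\ref{thm:main} for every member of $P_s$ (which is how that proof is phrased), and to remember that, because $\Atts$ has no $\CFXs \times \CFXs$ component, routing the argument through the supporting models via the reciprocal supports is necessary rather than optional.
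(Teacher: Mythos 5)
Your proposal is correct and matches the paper's own proof: both argue by contradiction, use counterfactual coherence (from Theorem~\ref{thm:main}) to pull $\model_i,\model_j$ into $X$, and then observe that Definition~\ref{def:BAF} forces an attack between disagreeing models, violating conflict-freeness of an s-preferred set. Your additional remarks (that coherence holds for every member of $P_s$ and that totality of $\mpreceq$ guarantees at least one direction of attack) merely make explicit details the paper leaves implicit.
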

\begin{proof}
    Let us prove by contradiction, assuming that $\exists \cfx_i, \cfx_j \in X \cap \CFXs$.
    Counterfactual coherence (Theorem \ref{thm:main}) requires that $\model_i, \model_j \in X$. 
    However, by Definition \ref{def:BAF}, $\model_i(\inn) \neq \model_j(\inn)$ requires that $\model_i \in \Atts(\model_j)$ or  $\model_j \in \Atts(\model_i)$, and so we have the contradiction.
\end{proof}

Finally, we show that s-preferred sets 
of corresponding BAFs in our setting satisfy all the forms of admissibility for BAFs in \cite{Cayrol_05}
.

\begin{proposition}
\label{prop:admissibility}
    Any s-preferred set $X \in P_s$ is d-admissible, s-admissible and c-admissible. 
\end{proposition}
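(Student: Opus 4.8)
The plan is to peel the three claims apart in increasing order of effort, leaning almost entirely on facts already recorded earlier. By the definition of s-preferred sets in §\ref{sec:preliminaries}, every $X \in P_s$ is s-admissible by construction, so the s-admissibility claim needs no argument. The preliminaries also state that an s-admissible set is guaranteed to be d-admissible, so d-admissibility comes for free. Hence the only part carrying real content is c-admissibility.

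For c-admissibility, recall that the definition asks for three things: conflict-freeness, closure for $\Supps$, and defending all elements. Two of these are immediate: an s-admissible set is safe, and §\ref{sec:preliminaries} notes that a safe set is conflict-free, while ``defends all of its elements'' is literally part of s-admissibility. So the whole proposition collapses to showing that every $X \in P_s$ is closed for $\Supps$.

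To establish closure I would appeal to the structure of the corresponding BAF in Definition \ref{def:BAF}: the support relation consists exactly of the reciprocal pairs $(\model_i,\cfx_i)$ and $(\cfx_i,\model_i)$. Consequently, $X$ being closed for $\Supps$ is equivalent to the condition that, for every $i$, $\model_i \in X$ iff $\cfx_i \in X$ — which is precisely counterfactual coherence, already proved for all s-preferred sets in Theorem \ref{thm:main}. Assembling conflict-freeness, this closure property, and the defence of all elements yields c-admissibility, completing the proof. The only thing to be careful about — more a point of bookkeeping than a genuine obstacle — is that ``closed for $\Supps$'' here is read with respect to the direct support relation (consistent with the footnote in §\ref{sec:preliminaries} restricting set-support to single direct supports), so that closure really does coincide with counterfactual coherence rather than with some sequence-of-supports notion; once that reading is fixed, each step is a one-line citation of an earlier result.
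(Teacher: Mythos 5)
Your proposal is correct and follows essentially the same route as the paper: s-preferred sets are s-admissible by definition and hence d-admissible, and c-admissibility reduces to closure for $\Supps$, which follows from the fact that $\Supps$ contains only the reciprocal pairs $(\model_i,\cfx_i)$, $(\cfx_i,\model_i)$ together with counterfactual coherence from Theorem \ref{thm:main}. Your explicit unpacking of why conflict-freeness and defence of elements carry over from s-admissibility is a slightly more careful presentation of what the paper leaves implicit, but the substance is identical.
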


\begin{proof}
    Trivially, any s-preferred set is s-admissible and thus also d-admissible (see §\ref{sec:preliminaries}).
    Then, it can be seen from Definition \ref{def:BAF} that $\Supps \subseteq (\Models \times \CFXs) \cup (\CFXs \times \Models)$, $i=j$ $\forall (\model_i, \cfx_j) \in \Supps \cap (\Models \times \CFXs)$ and $k=l$ $\forall (\cfx_k, \model_l) \in \Supps \cap (\CFXs \times \Models)$.  
    By counterfactual coherence (Theorem \ref{thm:main}), $\forall \model_i \in \Models$ and $\forall \cfx_i \in \CFXs$, $\model_i \in X$ iff $\cfx_i \in X$.
    Then, since $P_s$ contains the sets of $\Args$ which are maximal wrt set-inclusion, any $X' \in P_s$ must be closed for $\Supps$ and thus c-admissible.
\end{proof}

\section{Empirical Evaluation}
\label{sec:evaluation}





We now examine the effectiveness of our approach using three real-world datasets. Specifically, we empirically 
evaluate the extent to which each of the ensembling 
methods introduced in §\ref{ssec:naive} and §\ref{sec:main} satisfy the desirable properties defined in §\ref{ssec:desirable_properties}. 
We also instantiate three variations of 
argumentative ensembling by including two different types of model properties 
$\Props$ and demonstrate the usefulness of incorporating model preferences into ensembling methods.


\subsection{Experiment Setup} We apply all ensembling methods on three datasets in the legal and financial contexts: loan approval (heloc) \cite{heloc}, recidivism prediction (compas) \cite{compas}, and credit risk (credit) \cite{Dua2019}. Due to 
neural networks' sensitivity to randomness at training time, 
they suffer severely from MM and are frequently targeted when investigating this research topic (as discussed in §\ref{sec:related}). Therefore, even though our method is model-agnostic, we focus on neural networks for the experiments.

For each dataset, we train-test 150 classifiers with five different hidden layer sizes using 80\% of the dataset (this 80\% is train-test split for training each model; see Appendix \ref{app:exp-dataset-training} for dataset and training details). The 150 neural networks are trained using different random seeds for parameter initialisation and 
different train-test splits (within the train-test 80\% of the dataset), forming a pool of possible models under MM from which we sample multiple sets  $\Models$ of models to which we apply our ensembling methods. 
We use the remaining 20\% of each dataset as test inputs for the ensembling methods (limited to 500 inputs test set if larger than this). 

At each run, we randomly sample, from the model pool, sets $\Models$ with 10, 20 or 30 models
, then we 
feed each input to the models to receive their predicted labels and generate one CE from each model using the nearest neighbour CEs approach of \cite{NiceNNCE}, and finally apply the ensembling methods. For each size (10, 20, 30), we 
perform five different 
choices of 
$\Models$, and record the 
mean and standard deviation of the results 
(over the five choices of model sets for each size). 

As concerns model preferences, we 
focus on 
accuracy of the trained classifiers over the (20\%) test inputs and model structure simplicity. 
For the latter, we assign the models, from the most complex to the simplest (depending on the number of neurons in the hidden layers, see Appendix \ref{app:exp-dataset-training}), scores of \{0, 0.25, 0.5, 0.75, 1\} such that higher values imply simpler models. Note that 
multiple models in $\Models$ may have the same simplicity scores 
as
we adopt only five different model structures to obtain 
150 neural networks for each dataset. Models in $\Models$ may have any (near-optimal) test accuracy: in the experiments each such
model 
has a different accuracy.

\begin{table*}[ht!]
    \centering
    \resizebox{2\columnwidth}{!}{
    \begin{tabular}{c|ccccc|ccccc|ccccc}
     
    & \textbf{acc.} & \textbf{simp.} & \textbf{size M}/\textbf{C} &    \textbf{$\cfx$ val. (fail)} & \textbf{mv}
    & \textbf{acc.} & \textbf{simp.} & \textbf{size M}/\textbf{C} & \textbf{$\cfx$ val. (fail)} & \textbf{mv}
    & \textbf{acc.} & \textbf{simp.} & \textbf{size M}/\textbf{C} & \textbf{$\cfx$ val. (fail)} & \textbf{mv}\\

    \hline
    &
    \multicolumn{5}{c|}{heloc} &
    \multicolumn{5}{c|}{compas} &
    \multicolumn{5}{c}{credit} \\
    \hline
    \textbf{$|\Models|=10$}  
    & \multicolumn{5}{l|}{.709$\pm$.003} 
    & \multicolumn{5}{l|}{.856$\pm$.001}
    & \multicolumn{5}{l}{.664$\pm$.008}\\

    $S^n$ 
    & .709 & .495 & .943/.943 & .657 (.00) & 1.00
    & .858 & .464 & .980/.980 & .572 (.00) & 1.00
    & .697 & .588 & .817/.817 & .757 (.00) & 1.00\\
    $S^v$ 
    & .709 & .495 & .943/.309 & 1.00 (.34) & 1.00
    & .858 & .464 & .980/.174 & 1.00 (.51) & 1.00
    & .697 & .588 & .817/.457 & 1.00 (.44) & 1.00\\
    
    \textbf{$S^a$}
    & .712 & .504 & .499/.499 & 1.00 (.00) & .983
    & .859 & .463 & .369/.369 & 1.00 (.00) & .994
    & .694 & .600 & .580/.580 & 1.00 (.00) & .953\\
    \textbf{$S^a$-A} 
    & .726 & .485 & .357/.357 & 1.00 (.00) & .943
    & .864 & .430 & .295/.295 & 1.00 (.00) & .988
    & .710 & .593 & .486/.486 & 1.00 (.00) & .825\\
    \textbf{$S^a$-S}
    & .710 & .608 & .462/.462 & 1.00 (.00) & .967
    & .860 & .657 & .306/.306 & 1.00 (.00) & .987
    & .689 & .626 & .565/.565 & 1.00 (.00) & .925\\
    \textbf{$S^a$-AS} 
    & .712 & .528 & .493/.493 & 1.00 (.00) & .980
    & .860 & .501 & .360/.360 & 1.00 (.00) & .994
    & .696 & .607 & .578/.578 & 1.00 (.00) & .946\\

    \hline
    \textbf{$|\Models|=20$}      
    & \multicolumn{5}{l|}{.710$\pm$.003} 
    & \multicolumn{5}{l|}{.855$\pm$.001}
    & \multicolumn{5}{l}{.663$\pm$.004}\\
    $S^n$ 
    & .717 & .488 & .940/.940 & .626 (.00) & 1.00
    & .859 & .538 & .978/.978 & .544 (.00) & 1.00
    & .708 & .571 & .810/.810 & .734 (.00) & 1.00\\
    $S^v$ 
    & .717 & .388 & .940/.230 & 1.00 (.37) & 1.00
    & .859 & .538 & .978/.111 & 1.00 (.60) & 1.00
    & .708 & .571 & .810/.351 & 1.00 (.62) & 1.00\\
    
    \textbf{$S^a$}
    & .716 & .466 & .460/.460 & 1.00 (.00) & .984
    & .859 & .514 & .331/.331 & 1.00 (.00) & .992
    & .691 & .580 & .557/.557 & 1.00 (.00) & .961\\
    \textbf{$S^a$-A} 
    & .728 & .432 & .361/.361 & 1.00 (.00) & .950
    & .866 & .541 & .235/.235 & 1.00 (.00) & .982
    & .709 & .586 & .481/.481 & 1.00 (.00) & .862\\
    \textbf{$S^a$-S} 
    & .711 & .551 & .420/.420 & 1.00 (.00) & .966
    & .857 & .609 & .304/.304 & 1.00 (.00) & .987
    & .684 & .590 & .549/.549 & 1.00 (.00) & .947\\
    \textbf{$S^a$-AS} 
    & .715 & .473 & .459/.459 & 1.00 (.00) & .984
    & .859 & .555 & .324/.324 & 1.00 (.00) & .990
    & .693 & .581 & .556/.556 & 1.00 (.00) & .959\\

    \hline
    \textbf{$|\Models|=30$}  
    & \multicolumn{5}{l|}{.710$\pm$.003} 
    & \multicolumn{5}{l|}{.855$\pm$.001}
    & \multicolumn{5}{l}{.663$\pm$.004}\\
    $S^n$ 
    & .718 & .512 & .940/.940 & .620 (.00) & 1.00
    & .859 & .527 & .976/.976 & .530 (.00) & 1.00
    & .710 & .540 & .807/.807 & .727 (.00) & 1.00\\
    $S^v$ 
    & .718 & .512 & .940/.205 & 1.00 (.41) & 1.00
    & .859 & .527 & .976/.087 & 1.00 (.56) & 1.00
    & .710 & .540 & .807/.311 & 1.00 (.70) & 1.00\\
   
    \textbf{$S^a$}
    & .716 & .499 & .456/.456 & 1.00 (.00) & .982
    & .862 & .519 & .308/.308 & 1.00 (.00) & .990
    & .683 & .549 & .551/.551 & 1.00 (.00) & .943\\
    \textbf{$S^a$-A} 
    & .729 & .406 & .353/.353 & 1.00 (.00) & .946
    & .865 & .532 & .225/.225 & 1.00 (.00) & .983
    & .711 & .552 & .441/.441 & 1.00 (.00) & .850\\
    \textbf{$S^a$-S} 
    & .712 & .518 & .445/.445 & 1.00 (.00) & .978
    & .861 & .567 & .294/.294 & 1.00 (.00) & .988
    & .684 & .555 & .546/.546 & 1.00 (.00) & .940\\
    \textbf{$S^a$-AS}
    & .716 & .500 & .456/.456 & 1.00 (.00) & .982
    & .862 & .543 & .303/.303 & 1.00 (.00) & .988
    & .683 & .549 & .551/.551 & 1.00 (.00) & .944\\

    \end{tabular}
    }
    \caption{Quantitative evaluations of ensembling methods on three datasets, heloc, compas, and credit. $|\Models|=\{10, 20, 30\}$ stands for results for different model set sizes, the acc. entries in the rows starting with $|\Models|$ are the average single model accuracies.}
    \label{tab:results-accuracy-cfx-validity}
\end{table*}

\textit{Evaluation metrics.} Each ensembling method is evaluated against the following metrics: prediction accuracy over the test set (\emph{acc.}), average model simplicity in the ensemble (\emph{simp.}), average size of models and CEs in the ensemble, measured as percentages of $|\Models|$ (\emph{size M/C}), average validities of ensembled CEs over the ensembled models (\emph{$\cfx$ val.}). Also, we report the percentage of test inputs for which a method fails to produce 
CEs (fail). The results are averaged over all the test inputs except for the failure cases. We also report the average test set accuracies of the models in $\Models$. Note that the model agreement property (Property~\ref{def:model_agreement}) is omitted as it is satisfied by every compared method
. To understand how the violation of the majority vote property affects our method, we measure the proportion of test inputs for which the predicted label using our method is the same as that of naive ensembling (\emph{mv}).

\textit{Ensembling Methods.} We use augmented 
and robust ensembling as 
baselines. 
For 
argumentative ensembling, we use four variations with different preferences
: $S^a$ ($\Props\!=\!\emptyset$), $S^a$-A ($\Props\!=\!\{accuracy\}$), $S^a$-S ($\Props\!=\!\{simplicity\}$) and $S^a$-AS ($\Props\!=\!\{accuracy$, $simplicity\}$, 
where $accuracy \!\psimeq\! simplicity$). 
In our implementation of argumentative ensembling, when $|X_{max}|\!>\!1$ (
Definition \ref{def:argensembling}), we return $S^a$, which has the same prediction label as naive ensembling. We 
give percentages of test inputs with $|X_{max}\!\!>\!\!1|$ in Table 3 in Appendix \ref{app:exp-multiple-solutions}.




\subsection{Results and Analyses}

We report the results for all experiments in Table \ref{tab:results-accuracy-cfx-validity} (the standard deviations are presented in Tables 4 to 6 in Appendix \ref{app:exp-std}).

\textit{Usefulness of preferences.} 
With test accuracy specified as model preference, $S^a$-A shows the best accuracy in all experiments. This validates Proposition \ref{prop:dominant_model}, because, assuming that the accuracy for every model in $\Models$ is different, for $S^a$-A, there exists a model in 
$\Models$ that is the most preferred 
and is included in the ensemble. Similarly, $S^a$-S shows the best simp. scores in all experiments. However, since simplicity scores are not unique for each model, usually a single most preferred model does not exist, therefore an optimal simp. evaluation is not guaranteed. 
When specifying both properties as model preferences ($S^a$-AS), at least one of the two metrics is improved compared with $S^a$.



\textit{Desirable properties of ensembling methods.} 
For up to 70\% of test inputs, robust ensembling does not find any 
CEs ($S^v \cap \CFXs=\emptyset$), 
confirming its violation of non-emptiness.
As $|\Models|$ increases, $S^v$ would require finding CEs which are valid for more models, and the number of CEs found would drop as shown by the results for the size C evaluations. In contrast, the remaining methods, including 
argumentative ensembling, always find non-empty ensembles, the sizes of which are also not affected by the model set sizes.


$S^n$ demonstrates low 
c Val. scores, showing that, on average, a CE from a model 
in the ensemble is only valid for 53.0\% to 75.7\% of other agreeing models. Therefore, the violation of counterfactual validity has a significant impact on results in practice. $S^v$ produces valid CEs over models in the ensemble, however, as previously mentioned, they do not always exist.

For $S^a$, we 
note
the same number of models and CEs in the solution set and 100\% counterfactual validity, confirming the behaviour predicted by 
Theorem \ref{thm:main}.
Argumentative ensembling shows model and CE ensemble sizes of 22.5\% (when $|\Models|=30$) to 58.0\% of $|\Models|$, meaning that 
it is non-trivially more selective than $S^n$ and $S^v$, only accepting the largest set of models with similar reasoning local to the test input (validated by agreement on CEs as required by counterfactual coherence). This results in comparable test accuracies as $S^n$ and $S^v$ with guaranteed CE validity. In fact, 
mostly, the $S^a$-S option has the lowest agreement rate with majority vote prediction 
(mv), 
but
it is more accurate than the baselines 
using naive ensembling. When no preference is specified, 
argumentative ensembling has higher accuracy than majority vote for 
heloc 
when $|\Models|=10$ and for 
compas 
when $|\Models|=10,30$. 
Thus,
we do not necessarily lose accuracy in satisfying  
properties besides majority vote.

\section{Conclusions and Future Work}
\label{sec:conclusion}

We
have presented a formal study of the problem of providing recourse under MM.
We defined several 
properties which are desirable in methods for solving this problem, highlighting deficiencies in 
extending conservatively the standard naive ensembling used for MM without recourse.
We have then introduced argumentative ensembling, a novel method for providing recourse under MM, which leverages computational argumentation to incorporate robustness guarantees and user preferences over models. 
We show, by means of a theoretical analysis, that argumentative ensembling hosts advantages over 
other methods, notably in non-emptiness of solutions and validity of CEs, notwithstanding its ability to handle user preferences.
This is, however, achieved by sacrificing the satisfaction of the property of majority vote, and so we conducted an empirical analysis with three real-world datasets to examine the effects of this 
sacrifice.
Our results demonstrate that 
argumentative ensembling always finds valid CEs without compromising prediction accuracy, and shows the usefulness of specifying preferences over models.

This paper opens up several potentially fruitful directions for future work.
First, it would be interesting to examine the extent to which considering attacks to or from \emph{sets} of arguments, rather than single arguments, as in \cite{Nielsen_06,Flouris_19,Dvorak_22,Dimopoulos_23}, 
may help in MM given the conflicts between agreeing sets of models.
Further, extended AFs~\cite{Modgil_09} and value-based AFs \cite{value-based} may provide useful alternative ways to account for preferences.
We would also like to exploit the explanatory potential of argumentation to 
support explainable ensembling, e.g. using sub-graphs as in~\cite{Fan_14,Zeng_19}.
Moreover, in order to support experiments with a high number of models (beyond the 30 we considered),  large-scale argumentation solvers would be highly desirable.
Finally, it would be interesting to assess the effect which MM has on users' evaluations of CEs.

\section*{Acknowledgement}
{Jiang, Rago and Toni were partially funded by J.P. Morgan and by the Royal Academy of Engineering under the Research Chairs and Senior Research Fellowships scheme. 
Leofante is supported by an Imperial College Research Fellowship grant. 
Rago and Toni were partially funded by the European Research Council (ERC) under the European Union’s Horizon 2020 research and innovation programme (grant agreement No. 101020934). 
Any views or opinions expressed herein are solely those of the authors listed.}

\bibliographystyle{named}
\bibliography{bib}

\clearpage
\newpage
\appendix

\section*{Appendices}
\appendix

\section{Dataset and Training Details}
\label{app:exp-dataset-training}

The neural networks we train for all datasets are with two hidden layers of sizes \{(15, 15), (20, 15), (20, 20), (30, 20), (30, 25)\}. The five different sizes are for obtaining five levels of model simplicity which serves as a toy property evaluation to demonstrate the usefulness of supporting model preferences in our framework. We observe no significant changes in test accuracies across the different model sizes, as the accuracy variations fall within the range of standard deviation. We use the \texttt{sklearn} implementation of neural networks for the experiment (https://scikit-learn.org/stable/modules/generated/\\sklearn.neural\_network.MLPClassifier.html) and train with a batch size of 64. All experiments are executed on a standard Windows machine with an Intel Core i7-12700H CPU with 40GB memory.

The heloc dataset has 9871 data points with 21 continuous features. The 5-fold cross-validation accuracies for each size are, respectively, .730$\pm$.015, .738$\pm$.009, .741$\pm$.008, .734$\pm$.007, .736$\pm$.009.

The compas dataset has 6172 data points with 3 discrete and 4 continuous features. The 5-fold cross-validation accuracies for each size are .844$\pm$.013, .841$\pm$.013, .845$\pm$.011, .840$\pm$.012, .845$\pm$.012.

There are 1000 data points and 20 categorical features in the credit dataset. The 5-fold cross-validation accuracies for each size are .709$\pm$.028, .699$\pm$.007, .705$\pm$.018, .697$\pm$.011, .696$\pm$.021.

\section{When Multiple Solutions Exist}
\label{app:exp-multiple-solutions}

\begin{table*}[ht!]
    \centering

    \begin{tabular}{c|cc|cc|cc}
     
    & \textbf{multiple.} & \textbf{same.}
    & \textbf{multiple.} & \textbf{same.}
    & \textbf{multiple.} & \textbf{same.} \\

    \hline
    &
    \multicolumn{2}{c|}{heloc} &
    \multicolumn{2}{c|}{compas} &
    \multicolumn{2}{c}{credit} \\
    \hline
    \textbf{$|\Models|=10$}  
    & \multicolumn{2}{l|}{} 
    & \multicolumn{2}{l|}{}
    & \multicolumn{2}{l}{}\\

    $S^n$ 
    & .022$\pm$.003 & .000$\pm$.000  
    & .005$\pm$.004 & .000$\pm$.000  
    & .074$\pm$.016 & .000$\pm$.000  \\
    $S^v$ 
    & .022$\pm$.003 & .000$\pm$.000  
    & .005$\pm$.004 & .000$\pm$.000  
    & .074$\pm$.016 & .000$\pm$.000  \\
    
    \textbf{$S^a$}
    & .164$\pm$.002 & .145$\pm$.004
    & .256$\pm$.008 & .248$\pm$.008
    & .100$\pm$.015 & .046$\pm$.013\\
    \textbf{$S^a$-A} 
    & .000$\pm$.000 & .000$\pm$.015
    & .084$\pm$.010 & .083$\pm$.103
    & .015$\pm$.029 & .010$\pm$.019\\
    \textbf{$S^a$-S}
    & .097$\pm$.005 & .088$\pm$.026
    & .212$\pm$.012 & .207$\pm$.012
    & .072$\pm$.025 & .036$\pm$.008\\
    \textbf{$S^a$-AS} 
    & .137$\pm$.002 & .122$\pm$.005
    & .211$\pm$.005 & .206$\pm$.005
    & .089$\pm$.019 & .040$\pm$.010\\

    \hline
    \textbf{$|\Models|=20$}  
    & \multicolumn{2}{l|}{} 
    & \multicolumn{2}{l|}{}
    & \multicolumn{2}{l}{}\\

    $S^n$ 
    & .009$\pm$.002 & .000$\pm$.000 
    & .002$\pm$.002 & .000$\pm$.000  
    & .033$\pm$.006 & .000$\pm$.000  \\
    $S^v$ 
    & .009$\pm$.002 & .000$\pm$.000  
    & .002$\pm$.002 & .000$\pm$.000  
    & .033$\pm$.006 & .000$\pm$.000  \\
    
    \textbf{$S^a$}
    & .105$\pm$.006 & .095$\pm$.006
    & .178$\pm$.052 & .174$\pm$.048
    & .057$\pm$.006 & .023$\pm$.006\\
    \textbf{$S^a$-A} 
    & .018$\pm$.035 & .016$\pm$.033
    & .027$\pm$.054 & .027$\pm$.053
    & .014$\pm$.020 & .007$\pm$.012\\
    \textbf{$S^a$-S}
    & .078$\pm$.039 & .072$\pm$.037
    & .136$\pm$.028 & .132$\pm$.028
    & .047$\pm$.009 & .024$\pm$.011\\
    \textbf{$S^a$-AS} 
    & .102$\pm$.009 & .093$\pm$.009
    & .146$\pm$.024 & .142$\pm$.022
    & .059$\pm$.006 & .024$\pm$.007\\
    \hline
    \textbf{$|\Models|=30$}  
    & \multicolumn{2}{l|}{} 
    & \multicolumn{2}{l|}{}
    & \multicolumn{2}{l}{}\\

    $S^n$ 
    & .009$\pm$.002 & .000$\pm$.000  
    & .002$\pm$.001 & .000$\pm$.000  
    & .022$\pm$.008 & .000$\pm$.000  \\
    $S^v$ 
    & .009$\pm$.002 & .000$\pm$.000  
    & .002$\pm$.001 & .000$\pm$.000  
    & .022$\pm$.008 & .000$\pm$.000  \\
    
    \textbf{$S^a$}
    & .078$\pm$.013 & .072$\pm$.011
    & .133$\pm$.032 & .130$\pm$.032
    & .038$\pm$.013 & .019$\pm$.009\\
    \textbf{$S^a$-A} 
    & .000$\pm$.000 & .000$\pm$.000
    & .035$\pm$.050 & .035$\pm$.050
    & .010$\pm$.020 & .005$\pm$.010\\
    \textbf{$S^a$-S}
    & .066$\pm$.019 & .062$\pm$.016
    & .093$\pm$.011 & .091$\pm$.011
    & .030$\pm$.009 & .012$\pm$.005\\
    \textbf{$S^a$-AS} 
    & .076$\pm$.012 & .070$\pm$.012
    & .102$\pm$.022 & .098$\pm$.021
    & .038$\pm$.012 & .019$\pm$.007\\
    \end{tabular}
    
    \caption{The proportion of inputs for which multiple solutions and multiple solutions with the same prediction label exist.}
    \label{tab:results-multiple-solutions}
\end{table*}

We report for all ensembling methods the proportion of test inputs for which multiple solutions exist in Table~\ref{tab:results-multiple-solutions} (\emph{multiple.}). In the case of naive ensembling, whenever multiple solutions are found, they have different prediction labels. For our argumentative ensembling method, it could be the case that multiple solutions give the same prediction result in which case the desirable properties satisfaction is not compromised, we thus also include the proportion of inputs where this happens (reported as \emph{same.}). 

Though the number of cases when $|X_{max}|>1$ can be relatively high for our method, they mostly provide the same prediction results for the input, in which case the only difference for the end user (individual data subject) is the recourse they could receive between the multiple solutions. On the other hand, compared with naive ensembling methods, it is less frequent for our method to find equally sized ensembles which disagree on the prediction for the input, observed by $(multiple. - same.)$. As mentioned in §\ref{sec:main}, in practice, one could use all viable resulting ensembles to make more informed decisions about the individual. Therefore, when multiple answers exist, argumentative ensembling provides extra flexibility for its users.

\section{Full Evaluation Results}
\label{app:exp-std}

We report the full results of Table~\ref{tab:results-accuracy-cfx-validity} including standard deviations in Tables~\ref{tab:results-std1} to \ref{tab:results-std3}.

\begin{table*}[h]
    \centering
    \begin{tabular}{c|ccccc}
     
    & \textbf{acc.} & \textbf{simp.} & \textbf{size M}/\textbf{C} &    \textbf{$\cfx$ val. (fail)} & \textbf{mv.}\\
    \hline
    &
    \multicolumn{5}{c}{heloc} \\
    \hline
    \textbf{$|\Models|=10$}  
    & \multicolumn{5}{l}{.709$\pm$.003} \\
    $S^n$ 
    & .709$\pm$.005 & .495$\pm$.122 & .943$\pm$.003/.943$\pm$.003 & .657$\pm$.020 (.00$\pm$.000) & 1.00$\pm$.000\\
    $S^v$ 
    & .709$\pm$.005 & .495$\pm$.122 & .943$\pm$.003/.309$\pm$.028 & 1.00$\pm$.000 (.34$\pm$.059) & 1.00$\pm$.000\\
    \textbf{$S^a$}
    & .712$\pm$.005 & .504$\pm$.148 & .499$\pm$.016/.499$\pm$.016 & 1.00$\pm$.000 (.00$\pm$.000) & .983$\pm$.004\\
    \textbf{$S^a$-A} 
    & .726$\pm$.006 & .485$\pm$.229 & .357$\pm$.048/.357$\pm$.048 & 1.00$\pm$.000 (.00$\pm$.000) & .943$\pm$.015\\
    \textbf{$S^a$-S}
    & .710$\pm$.007 & .608$\pm$.085 & .462$\pm$.035/.462$\pm$.035 & 1.00$\pm$.000 (.00$\pm$.000) & .967$\pm$.026\\
    \textbf{$S^a$-AS} 
    & .712$\pm$.005 & .528$\pm$.135 & .493$\pm$.017/.493$\pm$.017 & 1.00$\pm$.000 (.00$\pm$.000) & .980$\pm$.005\\

    \hline
    \textbf{$|\Models|=20$}      
    & \multicolumn{5}{l}{.710$\pm$.003}\\
    $S^n$ 
    & .717$\pm$.007 & .488$\pm$.132 & .940$\pm$.004/.940$\pm$.004 & .626$\pm$.013 (.00$\pm$.000) & 1.00$\pm$.000\\
    $S^v$ 
    & .717$\pm$.007 & .388$\pm$.132 & .940$\pm$.004/.230$\pm$.021 & 1.00$\pm$.000 (.37$\pm$.035) & 1.00$\pm$.000\\
    \textbf{$S^a$}
    & .716$\pm$.006 & .466$\pm$.144 & .460$\pm$.016/.460$\pm$.016 & 1.00$\pm$.000 (.00$\pm$.000) & .984$\pm$.005\\
    \textbf{$S^a$-A} 
    & .728$\pm$.006 & .432$\pm$.149 & .361$\pm$.024/.361$\pm$.024 & 1.00$\pm$.000 (.00$\pm$.000) & .950$\pm$.010\\
    \textbf{$S^a$-S} 
    & .711$\pm$.005 & .551$\pm$.094 & .420$\pm$.031/.420$\pm$.031 & 1.00$\pm$.000 (.00$\pm$.000) & .966$\pm$.016\\
    \textbf{$S^a$-AS} 
    & .715$\pm$.005 & .473$\pm$.138 & .459$\pm$.016/.459$\pm$.016 & 1.00$\pm$.000 (.00$\pm$.000) & .984$\pm$.005\\

    \hline
    \textbf{$|\Models|=30$}  
    & \multicolumn{5}{l}{.710$\pm$.003} \\
    $S^n$ 
    & .718$\pm$.004 & .512$\pm$.085 & .940$\pm$.003/.940$\pm$.003 & .620$\pm$.015 (.00$\pm$.000) & 1.00$\pm$.000\\
    $S^v$ 
    & .718$\pm$.004 & .512$\pm$.085 & .940$\pm$.003/.205$\pm$.013 & 1.00$\pm$.000 (.41$\pm$.045) & 1.00$\pm$.000\\
    \textbf{$S^a$}
    & .716$\pm$.004 & .499$\pm$.088 & .456$\pm$.017/.456$\pm$.017 & 1.00$\pm$.000 (.00$\pm$.000) & .982$\pm$.006\\
    \textbf{$S^a$-A} 
    & .729$\pm$.005 & .406$\pm$.069 & .353$\pm$.023/.353$\pm$.023 & 1.00$\pm$.000 (.00$\pm$.000) & .946$\pm$.009\\
    \textbf{$S^a$-S} 
    & .712$\pm$.005 & .518$\pm$.078 & .445$\pm$.011/.445$\pm$.011 & 1.00$\pm$.000 (.00$\pm$.000) & .978$\pm$.004\\
    \textbf{$S^a$-AS}
    & .716$\pm$.004 & .500$\pm$.088 & .456$\pm$.017/.456$\pm$.017 & 1.00$\pm$.000 (.00$\pm$.000) & .982$\pm$.006\\

    \end{tabular}
    \caption{Quantitative evaluations of ensembling methods on heloc dataset.}
    \label{tab:results-std1}
\end{table*}

\begin{table*}[h]
    \centering
    \begin{tabular}{c|ccccc}
     
    & \textbf{acc.} & \textbf{simp.} & \textbf{size M}/\textbf{C} &    \textbf{$\cfx$ val. (fail)} & \textbf{mv}\\

    \hline
    &
    \multicolumn{5}{c}{compas}\\
    \hline
    \textbf{$|\Models|=10$}  
    & \multicolumn{5}{l}{.856$\pm$.001}\\

    $S^n$ 
    & .858$\pm$.001 & .464$\pm$.127 & .980$\pm$.001/.980$\pm$.001 & .572$\pm$.008 (.00$\pm$.000) & 1.00$\pm$.000\\
    $S^v$ 
    & .858$\pm$.001 & .464$\pm$.127 & .980$\pm$.001/.174$\pm$.019 & 1.00$\pm$.000 (.51$\pm$.097) & 1.00$\pm$.000\\
    
    \textbf{$S^a$}
    & .859$\pm$.004 & .463$\pm$.137 & .369$\pm$.021/.369$\pm$.021 & 1.00$\pm$.000 (.00$\pm$.000) & .994$\pm$.002\\
    \textbf{$S^a$-A} 
    & .864$\pm$.003 & .430$\pm$.126 & .295$\pm$.042/.295$\pm$.042 & 1.00$\pm$.000 (.00$\pm$.000) & .988$\pm$.003\\
    \textbf{$S^a$-S}
    & .860$\pm$.002 & .657$\pm$.099 & .306$\pm$.036/.306$\pm$.036 & 1.00$\pm$.000 (.00$\pm$.000) & .987$\pm$.004\\
    \textbf{$S^a$-AS} 
    & .860$\pm$.005 & .501$\pm$.126 & .360$\pm$.023/.360$\pm$.023 & 1.00$\pm$.000 (.00$\pm$.000) & .994$\pm$.002\\

    \hline
    \textbf{$|\Models|=20$}      
    & \multicolumn{5}{l}{.855$\pm$.001}\\
    $S^n$ 
    & .859$\pm$.002 & .538$\pm$.112 & .978$\pm$.002/.978$\pm$.002 & .544$\pm$.004 (.00$\pm$.000) & 1.00$\pm$.000\\
    $S^v$ 
    & .859$\pm$.002 & .538$\pm$.112 & .978$\pm$.002/.111$\pm$.021 & 1.00$\pm$.000 (.60$\pm$.066) & 1.00$\pm$.000\\
    
    \textbf{$S^a$}
    & .859$\pm$.004 & .514$\pm$.115 & .331$\pm$.018/.331$\pm$.018 & 1.00$\pm$.000 (.00$\pm$.000) & .992$\pm$.006\\
    \textbf{$S^a$-A} 
    & .866$\pm$.001 & .541$\pm$.140 & .235$\pm$.027/.235$\pm$.027 & 1.00$\pm$.000 (.00$\pm$.000) & .982$\pm$.008\\
    \textbf{$S^a$-S} 
    & .857$\pm$.005 & .609$\pm$.070 & .304$\pm$.030/.304$\pm$.030 & 1.00$\pm$.000 (.00$\pm$.000) & .987$\pm$.004\\
    \textbf{$S^a$-AS} 
    & .859$\pm$.004 & .555$\pm$.111 & .324$\pm$.025/.324$\pm$.025 & 1.00$\pm$.000 (.00$\pm$.000) & .990$\pm$.006\\

    \hline
    \textbf{$|\Models|=30$}  
    & \multicolumn{5}{l}{.855$\pm$.001}\\
    $S^n$ 
    & .859$\pm$.001 & .527$\pm$.094 & .976$\pm$.002/.976$\pm$.002 & .530$\pm$.005 (.00$\pm$.000) & 1.00$\pm$.000\\
    $S^v$ 
    & .859$\pm$.001 & .527$\pm$.094 & .976$\pm$.002/.087$\pm$.014 & 1.00$\pm$.000 (.56$\pm$.008) & 1.00$\pm$.000\\
   
    \textbf{$S^a$}
    & .862$\pm$.004 & .519$\pm$.097 & .308$\pm$.010/.308$\pm$.010 & 1.00$\pm$.000 (.00$\pm$.000) & .990$\pm$.005\\
    \textbf{$S^a$-A} 
    & .865$\pm$.002 & .532$\pm$.142 & .225$\pm$.004/.225$\pm$.004 & 1.00$\pm$.000 (.00$\pm$.000) & .983$\pm$.008\\
    \textbf{$S^a$-S} 
    & .861$\pm$.005 & .567$\pm$.080 & .294$\pm$.018/.294$\pm$.018 & 1.00$\pm$.000 (.00$\pm$.000) & .988$\pm$.006\\
    \textbf{$S^a$-AS}
    & .862$\pm$.004 & .543$\pm$.099 & .303$\pm$.016/.303$\pm$.016 & 1.00$\pm$.000 (.00$\pm$.000) & .988$\pm$.007\\

    \end{tabular}
    \caption{Quantitative evaluations of ensembling methods on compas dataset.}
    \label{tab:results-std2}
\end{table*}

\begin{table*}[h]
    \centering
    \begin{tabular}{c|ccccc}
     
    & \textbf{acc.} & \textbf{simp.} & \textbf{size M}/\textbf{C} &    \textbf{$\cfx$ val. (fail)} & \textbf{mv}\\

    \hline
    &
    \multicolumn{5}{c}{credit}\\
    \hline
    \textbf{$|\Models|=10$}  
    & \multicolumn{5}{l}{.664$\pm$.007}\\

    $S^n$ 
    & .697$\pm$.019 & .588$\pm$.152 & .817$\pm$.005/.817$\pm$.005 & .757$\pm$.013 (.00$\pm$.000) & 1.00$\pm$.000\\
    $S^v$ 
    & .697$\pm$.019 & .588$\pm$.152 & .817$\pm$.005/.457$\pm$.010 & 1.00$\pm$.000 (.44$\pm$.044) & 1.00$\pm$.000\\
    
    \textbf{$S^a$}
    & .694$\pm$.011 & .600$\pm$.147 & .580$\pm$.012/.580$\pm$.012 & 1.00$\pm$.000 (.00$\pm$.000) & .953$\pm$.001\\
    \textbf{$S^a$-A} 
    & .710$\pm$.003 & .593$\pm$.186 & .486$\pm$.044/.486$\pm$.044 & 1.00$\pm$.000 (.00$\pm$.000) & .825$\pm$.005\\
    \textbf{$S^a$-S}
    & .689$\pm$.010 & .626$\pm$.132 & .565$\pm$.020/.565$\pm$.020 & 1.00$\pm$.000 (.00$\pm$.000) & .925$\pm$.021\\
    \textbf{$S^a$-AS} 
    & .696$\pm$.016 & .607$\pm$.148 & .578$\pm$.011/.578$\pm$.011 & 1.00$\pm$.000 (.00$\pm$.000) & .946$\pm$.004\\

    \hline
    \textbf{$|\Models|=20$}      
    & \multicolumn{5}{l}{.663$\pm$.004}\\
    $S^n$ 
    & .708$\pm$.012 & .571$\pm$.006 & .810$\pm$.004/.810$\pm$.004 & .734$\pm$.010 (.00$\pm$.000) & 1.00$\pm$.000\\
    $S^v$ 
    & .708$\pm$.012 & .571$\pm$.006 & .810$\pm$.004/.351$\pm$.030 & 1.00$\pm$.000 (.62$\pm$.039) & 1.00$\pm$.000\\
    
    \textbf{$S^a$}
    & .691$\pm$.009 & .580$\pm$.006 & .557$\pm$.008/.557$\pm$.008 & 1.00$\pm$.000 (.00$\pm$.000) & .961$\pm$.001\\
    \textbf{$S^a$-A} 
    & .709$\pm$.011 & .586$\pm$.007 & .481$\pm$.046/.481$\pm$.046 & 1.00$\pm$.000 (.00$\pm$.000) & .862$\pm$.005\\
    \textbf{$S^a$-S} 
    & .684$\pm$.013 & .590$\pm$.006 & .549$\pm$.012/.549$\pm$.012 & 1.00$\pm$.000 (.00$\pm$.000) & .947$\pm$.010\\
    \textbf{$S^a$-AS} 
    & .693$\pm$.008 & .581$\pm$.006 & .556$\pm$.009/.556$\pm$.009 & 1.00$\pm$.000 (.00$\pm$.000) & .959$\pm$.012\\

    \hline
    \textbf{$|\Models|=30$}  
    & \multicolumn{5}{l}{.663$\pm$.004}\\
    $S^n$ 
    & .710$\pm$.006 & .540$\pm$.040 & .807$\pm$.004/.807$\pm$.004 & .727$\pm$.007 (.00$\pm$.000) & 1.00$\pm$.000\\
    $S^v$ 
    & .710$\pm$.006 & .540$\pm$.040 & .807$\pm$.004/.311$\pm$.045 & 1.00$\pm$.000 (.70$\pm$.057) & 1.00$\pm$.000\\
   
    \textbf{$S^a$}
    & .683$\pm$.007 & .549$\pm$.039 & .551$\pm$.005/.551$\pm$.005 & 1.00$\pm$.000 (.00$\pm$.000) & .943$\pm$.022\\
    \textbf{$S^a$-A} 
    & .711$\pm$.007 & .552$\pm$.048 & .441$\pm$.051/.441$\pm$.051 & 1.00$\pm$.000 (.00$\pm$.000) & .850$\pm$.059\\
    \textbf{$S^a$-S} 
    & .684$\pm$.013 & .555$\pm$.034 & .546$\pm$.007/.546$\pm$.007 & 1.00$\pm$.000 (.00$\pm$.000) & .940$\pm$.022\\
    \textbf{$S^a$-AS}
    & .683$\pm$.007 & .549$\pm$.039 & .551$\pm$.005/.551$\pm$.005 & 1.00$\pm$.000 (.00$\pm$.000) & .944$\pm$.023\\

    \end{tabular}
    \caption{Quantitative evaluations of ensembling methods on credit dataset.}
    \label{tab:results-std3}
\end{table*}

\end{document}